\def\max{{\rm max}}
\newcommand{\E}{\mathbb{E}}
\newcommand{\R}{\mathbb{R}}
\newcommand{\Ea}[1]{\E\left[#1\right]}
\newcommand{\Eb}[2]{\E_{#1}\left[#2\right]}
\newtheorem{theorem}{Theorem}[section]
\newtheorem{lemma}[theorem]{Lemma}
\newtheorem{assumption}{Assumption}
\newtheorem{definition}{Definition}
\newtheorem{conjecture}{Conjecture}
\newtheorem{proposition}[theorem]{Proposition}
\newcommand{\bg}{{\boldsymbol{g}}}
\newcommand{\bA}{{\boldsymbol{A}}}
\newcommand{\bC}{{\boldsymbol{C}}}
\newcommand{\bS}{{\boldsymbol{S}}}
\newcommand{\bW}{{{\boldsymbol{{W}}}}}
\newcommand{\bw}{{\boldsymbol{w}}}
\newcommand{\bV}{{\boldsymbol{V}}}
\newcommand{\bzero}{\mathbf{0}}
\newcommand{\bone}{\mathbf{1}}
\newcommand{\bM}{{\boldsymbol{m}}}
\newcommand{\bMM}{{{\boldsymbol{M}}}}
\newcommand{\bv}{{\boldsymbol{v}}}
\newcommand{\bZ}{{\boldsymbol{Z}}}
\newcommand{\bomega}{{\boldsymbol{\omega}}}
\newcommand{\bz}{{\boldsymbol{z}}}
\newcommand{\bx}{{\boldsymbol{x}}}
\newcommand{\bX}{{\boldsymbol{X}}}
\newcommand{\bI}{{\boldsymbol{I}}}
\newcommand{\by}{{\boldsymbol{y}}}
\newcommand{\bxi}{{\boldsymbol{\xi}}}
\newcommand{\bu}{{\boldsymbol{u}}}
\renewcommand{\vec}[1]{\boldsymbol{#1}}
  \providecommand{\R}{\mathbb{R}} 
  \def\sign{\@ifnextchar*{\@sgnargscaled}{\@ifnextchar[{\sgnargscaleas}{\@ifnextchar{\bgroup}{\@sgnarg}{\sgn} }}}
  \def\@sgnarg#1{\sgn\rbr{#1}}
  \def\@sgnargscaled#1{\sgn\rbr*{#1}}
  \def\@sgnargscaleas[#1]#2{\sgn\rbr[#1]{#2}}
  \providecommand{\bb}{\bm{b}}
  \providecommand{\dd}{\bm{d}}
\providecommand{\E}{\mathbb{E}}%
  \providecommand{\cH}{\mathcal{H}}
  \providecommand{\cO}{\mathcal{O}}
\newcommand{\defeq}{\mathrel{\mathop:}=}
\newcommand{\speedup}[1]{{\color{gray}(\ifdim #1 pt > 0.3pt #1\else $< #1$\fi{}$\times$)}}
\newsavebox{\@brx}
\newcommand{\llangle}[1][]{\savebox{\@brx}{\(\m@th{#1\langle}\)}%
  \mathopen{\copy\@brx\mkern2mu\kern-0.9\wd\@brx\usebox{\@brx}}}
\newcommand{\rrangle}[1][]{\savebox{\@brx}{\(\m@th{#1\rangle}\)}%
  \mathclose{\copy\@brx\mkern2mu\kern-0.9\wd\@brx\usebox{\@brx}}}
\providecommand{\abs}[1]{\left\lvert#1\right\rvert}
\providecommand{\norm}[1]{\left\lVert#1\right\rVert}
  \providecommand{\R}{\mathbb{R}} %
  \providecommand{\Eb}[1]{{\mathbb E}\left[#1\right] }
  \providecommand{\bb}{\mathbf{b}}
  \providecommand{\dd}{\mathbf{d}}
  \providecommand{\cH}{\mathcal{H}}
  \providecommand{\cO}{\mathcal{O}}
\renewcommand{\vec}{\mathbf}
\providecommand{\mycomment}[3]{\todo[caption={},size=footnotesize,color=#1!20]{\textbf{#2: }#3}}%
\providecommand{\inlinecomment}[3]{%
  {\color{#1}#2: #3}}%
\newcommand\commenter[2]%
\newcommand\csname i#1\endcsname[1]{\inlinecomment{#2}{#1}{##1}}
\newcommand\csname #1\endcsname[1]{\mycomment{#2}{#1}{##1}}
  \definecolor{mydarkblue}{rgb}{0,0.08,0.45}
\author[1]{Emanuele Troiani}
\author[1,2]{Yatin Dandi}
\author[3]{Leonardo Defilippis}
\author[1]{\\ Lenka Zdeborová}
\author[3]{Bruno Loureiro}
\author[2]{Florent Krzakala}
\affil[1]{Statistical Physics Of Computation Laboratory, \'Ecole Polytechnique F\'ed\'erale de Lausanne (EPFL)}
\affil[2]{Information Learning and Physics Laboratory, \'Ecole Polytechnique F\'ed\'erale de Lausanne (EPFL)}
\affil[3]{Departement d'Informatique, \'Ecole Normale Sup\'erieure, PSL \& CNRS}
\title{Fundamental computational limits of weak learnability in high-dimensional multi-index models}
\begin{document}

\maketitle

\begin{abstract}
  Multi-index models --- functions which only depend on the covariates through a non-linear transformation of their projection on a subspace --- are a useful benchmark for investigating feature learning with neural nets. This paper examines the theoretical boundaries of efficient learnability in this hypothesis class, focusing  on the minimum sample complexity required for weakly recovering their low-dimensional structure with first-order iterative algorithms, in the high-dimensional regime where the number of samples  $n\!=\!\alpha d$ is proportional to the covariate dimension $d$. Our findings unfold in three parts: (i) we identify under which conditions a \textit{trivial subspace} can be learned with a single step of a first-order algorithm for any $\alpha\!\!>\!\!0$; (ii) if the trivial subspace is empty, we provide necessary and sufficient conditions for the existence of an {\it easy subspace} where directions that can be learned only above a certain sample complexity $\alpha\!>\!\alpha_c$, where
  $\alpha_{c}$ marks a computational phase transition.
  In a limited but interesting set of really hard directions --akin to the parity problem-- $\alpha_c$ is found to diverge. Finally, (iii) we show that interactions between different directions can result in an intricate hierarchical learning phenomenon, where directions can be learned sequentially when coupled to easier ones. We discuss in detail the {\it grand staircase} picture associated to these functions (and contrast it with the original staircase one). Our theory builds on the optimality of approximate message-passing  among first-order iterative methods, delineating the fundamental learnability limit across a broad spectrum of algorithms, including neural networks trained with gradient descent, which we discuss in this context.
\end{abstract}

\section{Introduction}
\label{sec:setting}

A fundamental property of neural networks is their ability to learn features and adapt to relevant structures in high-dimensional noisy data. However, our mathematical understanding of this mechanism remains limited. A popular model for studying this question is \emph{multi-index models}. Multi-index functions are a  class of statistical models encoding the inductive bias that the relevant directions for prediction depend only on a low-dimensional subspace of the covariates ${\bf x} \in\mathbb{R}^{d}$:
\begin{equation} \label{first-eq}
    y = g(\bW\bx), \qquad \bW\in\mathbb{R}^{p\times d},
\end{equation}
where the mapping $g: \R^p \rightarrow \R$ can further be stochastic i.e. depend on independent noise $\xi \in \R^{p'}$ for some finte $p'$.
They define a rich class of hypotheses  \citep{Li1991}, containing many widely studied functions in the statistical learning and theoretical computer science literature. The simplest instance is the  {\it Linear model  ($p\!=\!1$)} given by a linear link function $g(z) \!=\! z$, or its noisy version $g(z, \xi) \!=\! z \!+\! \sqrt{\Delta} \xi$, with $\xi\!\sim\! {\cal N}(0,1)$. In the {\it Single-index model ($p=1$)}, also known as {\it generalised linear model}\footnote{More precisely,  \emph{single-index model} is employed in a context where both the direction $\bw$ and link function $g$ are learned, while \emph{generalised linear model} is used when the link function is known and only the weights are learned. In this work, we use both interchangeability.}, $g(z)$ can be an arbitrary, possibility stochastic, link function, including some widely studied problems such as {\it phase retrieval} $g(z)\!=\!z^{2}$ and the perceptron $g(z)={\rm sign}(z)$.
Some example of multi-index models include the {\it Two-layer neural network ($p>1$)}, given by a link function $g(z_{1},\dots, z_{p}) = \sum_{k=1}^{p}a_{k}\sigma(z_{k})$; {\it Polynomial functions ($p>1$)}, given by any linear combination of products of $z_{1},\dots,z_{p}$, 
for example $g(z_{1},z_{2},z_{3}) = 1+z_{1}^{2}z_{2}^{2}-2z_{1}+z_{2}z_{3}^{3}$; and  {\it s-sparse parities ($p>1$)} that can be embedded in a multi-index model by taking a link function $g(z_{1},\dots,z_{p}) = \prod_{k\in I}{\rm sign}(z_{k})$ for any subset $I\subset[p]$ of size $|I|=s\leq p$.
Training a multi-index model typically translates to a non-convex optimization problem. Several authors have thus used multi-index models as a test-bed for understanding the behavior of neural nets and gradient-descent in non-convex, high-dimensional contexts, e.g. \citep{saad1995line,saad_1996, Arous2021, abbe22a, abbe23a, Veiga2022, ba2022high, arnaboldi23a, collinswoodfin2023hitting, Damian2023, bietti2023learning, moniri2023theory, berthier2024learning}.

 To serve as a useful benchmark, it is necessary to have an idea of the fundamental limit of learnability in such models. This translates to the question of how many observations from the model in \cref{first-eq} are required to obtain a better-than-random prediction within a class of algorithms, also known as \emph{weak learnability}. This can be studied both \emph{statistically} (within the class of all, including exponential time algorithms) or \emph{computationally} (restricted to a particular computational class, such as first-order algorithms). In the single-index case ($p\!=\!1$), weak learnability has been heavily studied under probabilistic assumptions for the weights and data distribution (e.g. i.i.d. Gaussian or uniformly in the sphere). The statistical threshold for learnability has been characterised by 
\cite{Barbier2019} 
when the covariate dimension $d$ is large. Optimal computational thresholds for the class of first-order iterative algorithms were also derived in \citep{mondelli2018fundamental, luo2019optimal, celentano20a, Maillard2020}. Algorithmic results were also proven for other computational models: \cite{damian2022neural} provided a lower bound $n \geq d^{\max(1,\sfrac{\ell}{2})}$ under the Correlational Statistical Query (CSQ) model, comprising algorithms that take queries of the type 
$\mathbb{E}[y\varphi(\bx)]$. Here $\ell$ denotes the {\it information exponent} \citep{Arous2021} of the target defined as the first non-zero integer $\ell \!>\! 0$ in the Hermite expansion of $g$: $   \ell \!=\!  \min \{j\!\in\!\mathbb{N}: \langle g,\cH_j\rangle_{\gamma}\neq 0 \}.
$

A related notion is that of staircase functions \citep{abbe22a,abbe23a}, which characterizes the sample complexity for sequential learning of directions under the under the CSQ model or online SGD. For staircase functions, the CSQ sample complexity is governed by the ``leap", which informally equals the maximum jump in degree (in the Hermite expansion) conditioned on previously learned directions. For instance the target $g(z_1z_2z_3) = z_1+z_1z_2+z_1z_2z_3$ has leap $1$, since the term $z_1z_2$ is linear conditioned on $z_1$ while $z_1z_2z_3$ is linear conditioned on $z_1,z_2$.  Additionally \cite{damian2024computationalstatistical} provided results for the Statistical Query (SQ) model, which allows for more flexible queries of the type $\mathbb{E}[\varphi(\bx,y)]$, and a lower sample complexity $n\geq d^{\max(1,\sfrac{\kappa}{2})}$ with $\kappa\leq \ell$ defining the \emph{generative exponent}. As we shall see, our results allows to reconcile this picture with an {\it extended} version of staircase functions that we refer to as {\it grand staircase functions}.

Aside from the particular case of committee machines \citep{Aubin2018, pmlr-v125-diakonikolas20d, goel2020superpolynomial, chen2022hardness}, results for general multi-index models $p>1$ are scarce, as they crucially depend on the way different directions are coupled by the link function. The goal of the present work is precisely to close this gap for the class of Gaussian multi-index models in the proportional, high-dimensional regime, and to provide a classification of how hard it is to learn feature directions from data in multi-index models. More precisely, our {\bf main contributions} are:
\begin{itemize}[leftmargin=*, noitemsep,wide=1pt]
    \item We analyse the computational limits of weak learnability (Def.~\ref{def:weak}) for Gaussian multi-index models in the class of first-order methods, in the high-dimensional limit when $d,n\!\to\!\infty$ at fixed ratio $\alpha\!=\!\sfrac{n}{d}$ and constant number of indices $p\!=\!\Theta(1)$.
    Our analysis leverages the optimality of Bayesian approximate message passing algorithms (AMP) \citep{Donoho2009,rangan2011generalized} among first-order iterative methods \citep{zdeborova2016statistical,celentano20a,montanari2022statistically}, allowing us to delineate the fundamental learnability limits across a wide range of algorithms, including neural nets trained with gradient descent. 
    
    \item We provide a classification of which directions are computationally \emph{trivial}, \emph{easy} or \emph{hard} to learn with AMP from linearly many samples: \textit{trivial directions} can be learned with a single AMP iteration for any $\alpha\!>\!0$. When no such direction exists, we show there might be \emph{easy directions} allowing recovery from arbitrarily small side-information $\sqrt{\lambda} \!>\! 0$ in $\mathcal{O}(\log \sfrac{1}{\lambda})$ steps with at least $n\!=\! \alpha_c d$ samples, where $\alpha_c$ marks a computational transition. Heuristically, we expect recovery along easy directions from random initialization with $\alpha \!> \!\alpha_c $ in $\mathcal{O}(\log d)$ iterations and confirm this numerically through extensive simulations. \footnote{Note that $\alpha_c$ may or may not coincide with the statistically optimal threshold. In no way do we imply the existence or absence of a computational-to-statistical gap.} We provide a sharp formula for the critical sample complexity $\alpha_c$ above which easy directions are weakly learnable. It is conjectured that no efficient iterative algorithm can succeed for $\alpha\!<\!\alpha_c$. The \emph{hardest} multi-index problems are those where $\alpha_c$ diverges, and where none of the directions can be learned from $n\!=\! \mathcal{O}(d)$ samples.
    
    \item We discuss how interactions among directions can lead to hierarchical learning phenomena, with the hierarchy referring to learning a sequence of growing subspaces. Interestingly, even hard directions can be learned when coupled to easier ones through a hierarchical process, which is reminiscent of the staircase phenomenon for one-pass SGD \citep{abbe23a}. However, the class of functions that are weakly learnable hierarchically with AMP differs from SGD and is, in particular, considerably larger. We characterise this class of \emph{grand staircase functions} and argue it is the relevant computational complexity class for learning multi-index models.  
\end{itemize}
Finally, the code to reproduce our plots, to run the AMP algorithm, and to deploy state evolution  is available on GitHub \url{https://github.com/SPOC-group/FundamentalLimitsMultiIndex}

\paragraph{Further related work --- } With its origins on the classical projection pursuit method \citep{Friedman1974, Friedman1981}, there is an extensive literature dedicated to designing and analysing efficient algorithms to train multi-index models models, such as Isotronic Regression for single-index \citep{Brillinger1982, KalaiS09, Kakade2011} and Sliced Inverse Regression in the multi-index case \citep{dalalyan08a, Yuan2011, Fornasier2012, Babichev2018}.

The case $p=1$ has seen a lot of interest recently. In terms of gradient descent, \cite{Arous2021} has shown that if the link function $g$ is known, one-pass SGD achieves weak recovery in $n=\Theta(d^{\ell-1})$ steps, where $\ell$, known as the \emph{information exponent}, is the first non-zero Hermite coefficient of the link function $g$. In the non-parametric setting ---where $g$ is unknown--- \cite{berthier2024learning} has shown that a large-width two-layer neural network trained under one-pass SGD can learn an "easy" single-index target ($\ell=1$) in $n=\Theta(d)$ steps. This is to be contrasted with full-batch GD, which can achieve weak recovery in $\Theta(1)$ steps with sample complexity $n=\Theta(d)$ even for particular problems with $\ell>1$ \citep{dandi2024benefits} (which is, indeed, the statistical optimal rate \citep{Barbier2019,mondelli2018fundamental}). Similar results have also been proven for abstract computational models. \cite{damian2022neural} has proven a lower bound $n \geq d^{\max(1,\sfrac{\ell}{2})}$ under the Correlational Statistical Query (CSQ) model, comprising algorithms that take queries of the type $\mathbb{E}[y\varphi(\bx)]$. More recently, \cite{damian2024computationalstatistical} has proven a similar result for the Statistical Query (SQ) model, which allows for more flexible queries of the type $\mathbb{E}[\varphi(\bx,y)]$ and hence a lower sample complexity $n\geq d^{\max(1,\sfrac{\kappa}{2})}$ with $\kappa\leq \ell$ defining the \emph{generative exponent}. This turns out to be equivalent to the optimal computational weak recovery threshold of \citep{Barbier2019,mondelli2018fundamental,luo2019optimal,Maillard2020}.

While the situation is less understood, the multicase has also whitenessed a surge of recent interest. In particular, it has been used to understand the behavior of gradiend descent algorithm in neural networks. For instance, \cite{abbe22a, abbe23a} showed that a certain class of \emph{staircase} functions can be learned by large-width two-layer networks trained under one-pass SGD with sample complexity $n=\Theta(d)$. This is in stark contrast to (embedded) $s$-sparse parities, which require $n\geq d^{s-1}$ samples \citep{Blum2003}. \cite{abbe22a, abbe23a} introduced the \emph{leap exponent}, a direction-wise generalisation of the information exponent for multi-index models, and studied the class of \emph{staircase functions} which can be efficiently learned with one-pass SGD. \cite{bietti2023learning} showed that under a particular gradient flow scheme preserving weight orthogonality (a.k.a. \emph{Stiefel gradient flow}), training follow a saddle-to-saddle dynamics, with the characteristic time required to escape a saddle given by the leap exponent. Interestingly, it was then shown that the limit discovered in these set of works could be bypassed by slightly different algorithm, for instance by smoothing the landscapes \cite{Damian2023} or reusing batches multi-time \cite{dandi2024benefits}. The latter paper, in particular, showing that gradient descent could learn efficiently a larger class of multi-index models that previously believed to be possible.  These findings highlight the need of a strict understanding of the limit learnability of these models.

Our approach to establish the limit of computational learnability is based on the study of the approximate message passing algorithm (AMP).  Originating from the cavity method in physics \cite{mezard1989space,kabashima2008inference}, AMP \cite{Donoho2009}, and its generalized version GAMP \cite{rangan2011generalized} are powerful iterative algorithm to study these high-dimensional setting. These algorithm are widely believed to be optimal between all polynomial algorithms for such high-dimensional problems \cite{zdeborova2016statistical,deshpande2015finding,bandeira2018notes,Aubin2018,bandeira2022franz}. In fact, they are provably optimal among all iterative first-order algorithms \cite{celentano20a,montanari2022statistically}, a very large class of methods that include gradient descent.

While these algorithm were studied in great detail for the single index models, see e.g. \cite{Barbier2019,aubin2020exact,aubin2020generalization}, and are at the roots of the spectral method that underline the learnability phase transition in this case \cite{mondelli2018fundamental,Maillard2020}, much less is known in the multi-index case, with the exception of \cite{Aubin2018}, who showed that particular instances of multi-index models known as \emph{committee machines} can be learned with $n=\mathcal{O}(d)$ samples.

Different from our approach  are \citep{pmlr-v125-diakonikolas20d,goel2020superpolynomial,chen2022hardness}, who focused also on peculiar form of the multi-index models (using combination of Relu) and derived worst case bound (in terms of the hardest possible function). Instead, we focus on the typical-case learnability of a given, explicit, multi-index model.

\section{Settings and definitions}
Using standard notations\footnote{We denote by $\mathcal{S}^{+}_{p}$, the cone of positive-semi-definite matrices in $\R^p$ and by $\succ$ the associated partial ordering. For $\bMM\in \mathcal{S}^{+}_{p}$, we denote by $\sqrt{M}$ the matrix-square root. $||\cdot||$ denotes the operator norm and $||\cdot||_{F}$ the Frobenius norm. For two sequences of scalars $f(d),g(d)$, we use the standard asymptotic notation $f(d) = \Theta(g(d))$ to denote that $c \abs{g(d)} \leq \abs{f(d)} \leq C \abs{g(d)}$ for some constants $0 < c \leq C$ and large enough $d$. We denote by  $f(d) = \mathcal{O}(g(d))$ and $f(d) = \Omega(g(d))$ the one-sided asymptotic bounds $\abs{f(d)} \leq C \abs{g(d)}$ and $c \abs{g(d)} \leq \abs{f(d)}$ respectively. }, our main focus in this work will be to study subspace identifiability in the class of \emph{Gaussian multi-index models}.
\begin{definition}[Gaussian multi-index models] 
\label{def:model}
Given a covariate $\bx\sim\mathcal{N}(0,\sfrac{1}{d}\bI_{d})$, we define the class of Gaussian multi-index models as output mappings of the type:
\begin{equation}
\label{eq:def:likelihood}
    y = g(\bW^{\star}\bx)
\end{equation}
where $g:\mathbb{R}^{p}\to\mathbb{R}$ denotes the link function and $\bW^{\star}\in\mathbb{R}^{p
\times d}$ is a weight matrix with i.i.d. rows $\bw^{\star}_{k}\sim\mathcal{N}(0,\bI_{d})$. Note we allow the map $g:\mathbb{R}^{p}\to\mathbb{R}$ to be stochastic i.e $g(\cdot)= \tilde{g}(\cdot, \bxi)$ for random variable $\bxi \in \R^{p}$ independent of $\vec{x}$ having a fixed dimension-independent law such that $\Ea{\norm{\bm \xi}_{2}^2} < \infty$.\footnote{Technically, our results hold for $\bxi \in \R^{p'}$ with $p'=\mathcal{O}(1)$, for example the case of a degenerated noise. But for simplicity of exposition we focus on the $p'=p$ case.}
\end{definition}

Given $n$ i.i.d. samples $(\bx_{i},y_{i})_{i\in[n]}$ drawn as per Definition \ref{def:model}, we are interested in investigating the computational bottlenecks of estimating $\bW^{\star}$ from the samples $(\bx_{i},y_{i})_{i\in[n]}$. Note that reconstructing $\bW^{\star}$ or a permutation of its rows is equivalent from the perspective of the likelihood \cref{eq:def:likelihood}. Therefore, in this work, we will be interested in \emph{weak subspace learnability}, which corresponds to obtaining an estimation of the subspace spanned by $\bW^{\star}$ better than a random estimator. This can be defined in an invariant way: 
\begin{definition}[Weak subspace recovery] 
\label{def:weak}
 Let $V^{\star} \subset \R^p$ denote a subspace spanned by vectors representing components along $\bW^{\star}$ such that each $\vec{v} \in V^{\star}$ maps to a vector $\vec{v}_d$ in $\R^d$ through the map $\vec{v}_d=(\bW^{\star})^\top \vec{v}$. Given an estimator $\hat{\bW}\in\mathbb{R}^{p\times d}$ of $\bW^{\star}$ with $\norm{\hat{\bW}_i}^2_F=\Theta(d)$,  we have weak recovery of a $V^{\star} $ if: 
\begin{align}
   \underset{\bv \in \R^p, \norm{\bv}=1}{\inf} \norm{\frac{\hat{\bW} (\bW^{\star})^\top\bv}{d}}= \Theta(1). 
\end{align}
with high probability as $d\to\infty$. 
\end{definition}

Our main tool for characterising the computational bottlenecks in the  multi-index problem is an {\bf \emph{Approximate Message Passing}} (AMP, \Cref{alg:gamp}) acting on ${\bf B} \in \mathbb R^{d \times p}$ and  ${\bf \Omega} \in \mathbb R^{n \times p}$ tailored to our needs:
\begin{align}
    {\bm \Omega }^{t} &= \bX f_t({\bf B}^t) - g_{t-1}({\bm \Omega}^{t-1},{\bf y})\bV_t\\
    {\bf B}^{t+1} &= \bX^T g_t({\bm \Omega}^t,{\bf y}) + f_{t} ({\bf B}^{t})\bA_t       
\end{align}
where the denoisers $g_t(.,y_i)$ and $f_t(.)$ are vector-valued mapping applied row-wise on ${\bf \omega}_i \in \mathbb R^p = {\bf \Omega}_{.i}$ and ${\bf b}_i \in \mathbb R^p = {\bf B}_{.i}$. Here $\hat \bW^t\!\in\!\mathbb R^{p\times d}= f_t({\bf B}^t)^\top$ are the estimator of the weights matrix $\bW^\star$,  $g_t({\bm \Omega}^t) \in \mathbb R^{n \times p}$ are an estimate of the pre-activations, while  $\bA_t$ and $\bV_t$ are the so-called Onsager terms (see Sec.\ref{sec:amp_opt}).

AMP only involves matrix multiplication by $\bX\in\mathbb{R}^{n\times d}$ (and its transpose) 
and thus belongs to the class of first-order algorithms with linear running time of $\mathcal{O}(nd)$.  The key property of AMP is that for well-chosen $g_t$ and $f_t$ (using Bayesian denoisers) it is provably optimal within the class of first-order methods \citep{celentano20a}. In our case, this require $g_t (\bomega,y) = \bg_{\rm out}(y,\bomega,\bV)\in\mathbb{R}^p$ as in eq. (\ref{eq:out_denoiser}), while $f_t$ is just a Gaussian Bayesian denoiser  (see  Sec.\ref{sec:amp_opt}). Learnability for the optimal AMP thus implies a computational lower bound in the class of first-order methods, including in particular popular machine learning algorithms such as gradient descent (SG). For the Gaussian multi-index estimation problem \ref{def:model}, $\bg_{\rm out}$ is  given by the optimal denoiser of an effective $p$-dimensional problem $Y = g(\bV^{\sfrac{1}{2}}\bZ+\bomega)$ with $\bZ\sim\mathcal{N}(\bzero, \bI_{p})$, which reads
\begin{equation}\label{eq:out_denoiser}
\begin{split}
    \bg_{\rm out}(y,\bomega,\bV) = \mathbb{E}[ \bZ |Y=y] = 
    \frac{\int_{\mathbb{R}^{p}} \dd \bz \,e^{-\frac{1}{2}(\bz-\bomega)^{\top}\bV^{-1}(\bz-\bomega)}P(y|\bz)V^{-1}(\bz-\bomega)}{\int_{\mathbb{R}^{p}} \dd \bz \, e^{-\frac{1}{2}(\bz-\bomega)^{\top}\bV^{-1}(\bz-\bomega)}P(y|\bz)}, 
\end{split}
\end{equation}
where the conditional expectation is defined through the output channel $Y= g(\bV^{\sfrac{1}{2}}\bZ+\bomega)$. For stochastic mappings $g(\cdot) = g(\cdot, \xi)$, the expectation is w.r.t both $Z, \xi$ conditioned on $Y$. 
Our results rely on the following assumption on the above denoiser, translating to weak assumptions on $g, \xi$: 
\begin{assumption}\label{ass:link}
(a) $\bg_{\rm out}\!:\!\R^{p^2+p+1}\rightarrow \R^p \!\in\! \mathcal{C}^2$.
(b) the mapping $\hat{g}_{\rm out}: \R^{p+{p'}} \rightarrow \R^{p}$
defined by ${\hat{g}}_{\rm out}(\mathbf{h}, {\bm \xi}) = \bg_{\rm out}(\tilde{g}((\mathbf{h}, {\bm \xi}),\bomega,\bV)$ is pseudo-Lipschitz of finite-order for all $\bomega,\bV \in \R^p, \R^{p\times p}$. Here, ${\bm \xi} \!\in\! \R^{p'}$ denotes independent noise satisfying $\Ea{\norm{\bm\xi}^2} \! <\! \infty $ as per Def. \ref{def:model}.
\end{assumption}

The above denoiser can also be derived as an exact high-dimensional approximation of the Belief Propagation (BP) algorithm for the estimation of the marginals of the Bayesian posterior distribution:
\begin{equation}
    \label{eq:def:posterior}
    p(\bW|\bX,\by) \propto \prod\limits_{i=1}^{n}\delta(y_{i}-g(\bW\bx_{i}))\prod_{k=1}^{p}\mathcal{N}(\bw_{k}|\bzero,\bI_{d}).
\end{equation}
The Bayes-AMP algorithm was derived for Gaussian multi-index models by \cite{Aubin2018}, although it has been analysed only for the particular link function $g(\bz)= {\rm sign}(\sum_{k\in[p]}{\rm sign}(z_{k}))$, known as the \emph{committee machine}. Our goal in this work is to leverage \Cref{alg:gamp} in order to provide a sharp classification of which link functions $g$ are computationally challenging to learn. Our analysis is based on two remarkable properties that make AMP a particularly useful tool for studying high-dimensional estimation. The first property is that for any $t=\Theta(1)$, in the high-dimensional limit $d\to\infty$ the performance of AMP can be tracked without actually running the algorithm.  This result, known as \emph{state evolution}, makes AMP mathematically tractable in high-dimensions \citep{Bayati2011}. 
The second property is its optimality with respect to Bayesian estimation (see SI, \ref{sec:amp_opt}). For multi-index models, the state evolution equations were derived by \cite{aubin2020exact} and rigorously proven by \cite{Gerbelot}. It provides, in particular, an exact characterization of the asymptotic overlaps and prediction error:
\begin{lemma}[State evolution \citep{Aubin2018,Gerbelot}] 
\label{thm:se}
Let $(\bx_{i},y_{i})_{i\in[n]}$ denote $n$ {\it i.i.d.} samples from the multi-index model eq.(\ref{def:model}). Run  AMP from random initialization $\hat{\bW}^{0}\!\in\!\mathbb{R}^{p\times d}$ with $\hat{\bw}_{k}^{0}\overset{\text{i.i.d}}\sim\mathcal{N}(\bzero,\bI_{d})$. Denote by $\hat{\bW}^{t}$ the resulting estimator at time $0\leq t \!\leq\! T$. Then, in the high-dimensional limit $n,d\to\infty$ with fixed ratio $\alpha\!=\!\sfrac{n}{d}$, constant $p$ \& any finite time $T$, the limiting overlaps satisfy:
\begin{equation}
     \sfrac{1}{d}\hat{\bW}^{t}{\hat{\bW}^{t\top}} \xrightarrow{P} \bMM^{t}, \, \sfrac{1}{d}\hat{\bW}^{t}{\bW^{\star\top}} \xrightarrow{P} \bMM^{t}, 
\end{equation}
with $\bMM^{t}$ satisfying the \emph{state evolution equations} from initial condition $\bMM^{0}$ iterated with $\bMM^{t+1} \!=\! F(\bMM^{t})$:
 \begin{equation} 
 \label{eq:replica_equation}
    F(\bMM^{t}) = {G}\! \left(\!\alpha\,\mathbb{E}\Big[\bg_{\rm out}\!\left(Y^{t},\sqrt{\bMM^{t}}\bxi,\bI_{p}-\bMM^{t}\right)^{\otimes 2}
    \Big]
    \right).
\end{equation}
where $G(\bX) = (\bI_{p} + \bX)^{-1}\bX$ and the expectation is taken over the following effective process 
\begin{equation}
\label{eq:effective}
Y^{t} = g\left((\bI_{p}-\bMM^{t})^{\sfrac{1}{2}}\bZ+{\bMM^{t}}^{\sfrac{1}{2}}\bxi\right),
\end{equation}
with $\bZ, \bxi\sim\mathcal{N}(0,\bI_{p})$ independently from $\bZ$. The asymptotic mean-squared error on the label prediction is then given by:
\begin{equation} 
\nonumber
 \mathbb{E}_{\boldsymbol{x},y} \left[\left(y-g\left(\hat{\boldsymbol{W}}^{t}(\boldsymbol{X},\boldsymbol{y})\boldsymbol{x}\right)\right)^2\right]  \xrightarrow{P} \E[(Y^{t}-g(\bZ))^2], 
\end{equation}
where the expectation is taken over the effective estimation process eq.(\ref{eq:effective}) and $\xrightarrow{P}$ denotes convergence in probability w.r.t 
$\boldsymbol{X},\boldsymbol{y},\hat{\bW}^{0}$ as $n,d \rightarrow \infty$.
\end{lemma}

\section{The trivial subspace (\texorpdfstring{$\alpha_c=0$}{})}
\label{sec:trivial}

The state evolution \cref{eq:replica_equation} maps the problem of characterising the computational bottlenecks of first-order methods for high-dimensional Gaussian multi-index models to the study of the deterministic, $p$-dimensional dynamical system $\bMM^{t+1}=F(\bMM^{t})$. A starting point is identifying its fixed points and their basins of attraction.  In the absence of any prior information on $\bW^{\star}$ aside from its distribution, one cannot do better than taking $\hat{\bw}_{k}^{t=0}\sim\mathcal{N}(\bzero,\bI_{d})$ with $k\in[p]$ independently at random from the prior. With high-probability, at initialization, the elements of the overlap matrix $\sfrac{1}{d}\hat{\bW}^{0}{\bW^{\star}}^{\top}$ are $\Theta(d^{-\sfrac{1}{2}})$ element-wise. The asymptotic overlap for an uninformed initial condition is thus  $\bMM^{0}=0$, a null-rank matrix. If $\bMM^{0}=0$ is not a fixed point, then $\bMM^{1} \succ 0$, implying the weak recovery of a subspace of dimension $k={\rm rank}(\bMM^{1})>0$ with just a {\it single step of AMP}. 
\begin{lemma}[Existence of uninformed fixed point] 
\label{thm:unin:existence}
$\bMM=\bzero\in\mathbb{R}^{p\times p}$ is a fixed point of \cref{eq:replica_equation} if and only if the following condition holds almost surely over $Y$:
\begin{equation}
\label{eq:trivial}
     \bg_{\rm out}(Y,\bomega=\bzero,\bV=\bI_p) = \mathbb{E}[\bZ|Y] = \bzero,
\end{equation}
\end{lemma}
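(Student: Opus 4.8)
The plan is to evaluate the state-evolution map $F$ of \cref{eq:replica_equation} directly at $\bMM=0$ and read off when the image vanishes. At $\bMM^{t}=0$ one has $\sqrt{\bMM^{t}}=0$ and $\bI_{p}-\bMM^{t}=\bI_{p}$, so the effective process \cref{eq:effective} degenerates to a channel whose pre-activation is simply $\bZ\sim\mathcal N(0,\bI_{p})$; hence $Y^{0}$ has exactly the marginal law of $y$ in \Cref{def:model}. Substituting into \cref{eq:replica_equation} gives $F(0)=G(\alpha\bQ)$ with $\bQ\coloneqq\mathbb E\!\left[\bg_{\rm out}(Y^{0},\bzero,\bI_{p})\,\bg_{\rm out}(Y^{0},\bzero,\bI_{p})^{\top}\right]$, and by the definition \cref{eq:out_denoiser} the vector $\bg_{\rm out}(y,\bzero,\bI_{p})$ is precisely the Bayes estimator $\mathbb E[\bZ\mid Y=y]$, so condition \cref{eq:trivial} is the statement that this vector is $\bzero$ for $P_{y}$-almost every $y$.

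Next I would show $F(0)=0 \iff \bQ=0$. Since $\bQ$ is an expectation of rank-one positive-semidefinite matrices, $\bQ\succeq0$, so $\alpha\bQ\succeq0$ and $\bI_{p}+\alpha\bQ$ is invertible; then $G(\alpha\bQ)=(\bI_{p}+\alpha\bQ)^{-1}(\alpha\bQ)=0$ is, after left-multiplication by $\bI_{p}+\alpha\bQ$, equivalent to $\alpha\bQ=0$, i.e. to $\bQ=0$ because $\alpha>0$.

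The remaining step identifies $\bQ=0$ with \cref{eq:trivial}. Writing $\bv\coloneqq\bg_{\rm out}(Y^{0},\bzero,\bI_{p})=\mathbb E[\bZ\mid Y^{0}]$, we have $\operatorname{tr}\bQ=\mathbb E\|\bv\|^{2}\ge0$; thus $\bQ=0$ forces $\mathbb E\|\bv\|^{2}=0$, hence $\bv=\bzero$ almost surely over $Y^{0}$, and conversely $\bv=\bzero$ a.s. trivially yields $\bQ=0$. Since $Y^{0}$ has the law of $y$, this is exactly $\mathbb E[\bZ\mid Y=y]=\bzero$ $P_{y}$-a.s., establishing both directions of the lemma.

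I expect no serious obstacle. The only points requiring care are: (a) confirming that at $\bMM=0$ the effective channel \cref{eq:effective} reproduces the true marginal of $y$ and that $\bg_{\rm out}(\cdot,\bzero,\bI_{p})$ coincides with $\mathbb E[\bZ\mid Y]$, both of which are immediate from \cref{eq:out_denoiser}; (b) the elementary fact that $G(\bX)=0\iff\bX=0$ on the PSD cone, which rests on $\bI_{p}+\bX$ being invertible there; and (c) the standard observation that a PSD-matrix-valued expectation vanishes iff the random matrix is a.s. zero, which follows by taking traces. None of these needs more than a line.
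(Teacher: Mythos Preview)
Your argument is correct and complete. The paper does not actually spell out a proof of this lemma; it is stated in Section~\ref{sec:trivial} and treated as immediate from the form of the state-evolution map \cref{eq:replica_equation} together with the definition of the denoiser \cref{eq:out_denoiser}. Your write-up supplies exactly the details one would fill in: evaluating $F$ at $\bMM=0$, using invertibility of $\bI_p+\alpha\bQ$ on the PSD cone to get $G(\alpha\bQ)=0\iff\bQ=0$, and then the trace argument to pass from $\mathbb{E}[\bv\bv^\top]=0$ to $\bv=\bzero$ almost surely. This is the natural (and essentially only) route, and is fully consistent with how the paper sets things up.
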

As long as the conditional expectation above is not zero almost surely, then AMP weakly learns a non-empty subspace immediately in the first iteration for any number of samples $n=\Theta(d)$. For this reason, we refer to this subspace as a \emph{trivial subspace}:
\begin{definition}[Trivial subspace] \label{def:trivial}
We define $H^\star_T \subseteq \R^p$ as the subspace spanned by $\vec{v} \in \R^p$ such that the following holds almost surely over $Y=g(\bZ)$ with $\bZ \sim \mathcal{N}(\vec{0}, \bI_p)$: 
\begin{equation}
    \label{eq:def:existence}
   \bg_{\rm out}(Y, \bzero, \bI_{p})^\top \vec{v} =  \lim_{d \rightarrow \infty}\mathbb{E}[\langle \vec{v}^\top \bW^\star, \vec{x}\rangle|y=Y] = 0,
\end{equation}
where the expectation is w.r.t the joint measure $p(\vec{x},y)$ defined in Def. \ref{def:model} eq.(\ref{eq:effective}). The trivial subspace $T^\star$ is the orthogonal complement of $H^\star_T$.
\end{definition}

\begin{theorem}\label{thm:triv}
For any $\alpha>0$, with high-probability as $d\rightarrow \infty$, the AMP algorithm with the Bayes-optimal choice of $f_t,g_t$ recovers $T^\star$ as per \cref{def:weak} in a single iteration.
\end{theorem}
Note that for single-index models ($p=1$), the condition in \cref{eq:trivial} reduces to the one derived by \citep{mondelli2018fundamental, Barbier2019, Maillard2020}. Interestingly, this is {\it exactly} the same condition appearing in \cite{damian2024computationalstatistical} for weak learnability of single-index models in the SQ model; see eq.~(3) therein.
To make Lemma \ref{thm:unin:existence} concrete, let us look at a few examples (A detailed derivation of these examples is presented in  \Cref{app:examples}):
\begin{enumerate}[leftmargin=*,noitemsep,wide=1pt]
    \item For single-index models ($p=1$), $T^{*}$ is one dimensional if and only if $g$ is non-even, e.g. $g(z) = {\rm He}_{3}(z)$. This follows from requiring that $g_{\rm out}(y,0,1)\neq 0$ for at least one value of $y$. In particular, on any open interval where $g_{\rm out}$ is invertible we have $g_{\rm out}=g^{-1}$. 
    \item For a linear multi-index model, $g(\bz) =\sum_{i=1}^{p}z_{i}$, $T^{\star}$ is spanned by $\bone_{p}\in\mathbb{R}^{p}$ (all-one vector).
    \item For a committee  $g(\bz)\!=\!\sum_{i=1}^{p}{\rm sign}(z_{i})$, the trivial subspace $T^{\star}$ is again 1-d, spanned by $\bone\!\in\!\mathbb{R}^{p}$.     
    \item For monomials $g(\bz) = z_{1}\dots z_{p}$, the trivial subspace $T^{\star}$ is non-empty if and only if $p=1$. 
    \item For leap one staircase functions \citep{abbe23a}:
    \begin{equation}        
    \label{eq:staircase}
        g(\bz) = z_{1}+z_{1}z_{2}+z_{1}z_{2}z_{3}+\dots
    \end{equation}
    The trivial subspace is $T^{\star}=\mathbb{R}^{p}$ and is spanned by the canonical basis. In other words, AMP learns all the directions with a {\it single step} for any $\alpha>0$.
\end{enumerate}
Lemma 3.1 can also be related to computational models based on queries, such as SQ learning \citep{Kearns1998}: the denoiser $\bg_{\rm out}$ can indeed be interpreted as a non-linear transformation on the labels $y \!\mapsto\!  \bg_{\rm out}(y, \bzero, \bI_{p})$. From this perspective, the statement on the condition for the existence of a non-empty trivial subspace (\ref{eq:def:existence}) translates to the condition 
$\Ea{\bg_{\rm out}(y, \bzero, \bI_{p})^\top \vec{v} \langle  \vec{v}^\top \bW^\star, \vec{x}\rangle}=\Ea{\mathbb{E}[\langle \vec{v}^\top W^\star, \vec{x}\rangle|Y=y]^2} \neq 0$
where $\bv\in T^{\star}$. The left-hand side can be seen as a statistical query of the type $\mathbb{E}[\varphi(y)\psi(x)]$ with label pre-processing $\varphi = \bg_{\rm out}$. The fact that this linear correlation in the transformed labels is non-vanishing implies that one can weakly recover $\vec{v}$ through a tailored spectral method \citep{mondelli2018fundamental,luo2019optimal}. In fact the denoiser $\bg_{\rm out}$ is the optimal such transformation in the sense that when $\bg_{\rm out}$ fails to obtain a linear correlation along $\vec{v}$, i.e when $\vec{v} \in H^\star$, then no transformation can:
\begin{lemma}\label{lem:opt_t}
 $\vec{v} \in H^\star$ if and only if for any measurable transformation $\mathcal{T}:\R \rightarrow \R$:
 \begin{equation}
     \lim_{d \rightarrow \infty}\Ea{\mathcal{T}(y) \langle \vec{v}^\top \bW^\star, \vec{x}\rangle}=0.
 \end{equation}
 Note that the expectation is with respect to the distribution of the labels $y$ (not the effective problem). 
 Equivalently, $\vec{v} \in T^\star$ if and only if there exists a transformation such that $\lim_{d \rightarrow \infty}\Ea{\mathcal{T}(y) \langle \vec{v}^\top \bW^\star, \vec{x}\rangle} \neq 0$.
\end{lemma}

\paragraph{Remarks on GD ---}  The characterization of $H^\star$ and $T^\star$
allows us to contrast the (optimal) performance of AMP with that of gradient-descent based methods on neural nets: First, note that when $T^\star$ is empty, then $\Ea{\mathcal{T}(y) \langle \vec{v}^\top \bW^\star, \vec{x}\rangle}\!=\!0$ for all $\boldsymbol{v} \in \R^{p}$. This is precisely  the condition for the generative exponent \citep{damian2024computationalstatistical} being strictly larger than $1$.
This implies that one-pass SGD ---even after applying any transformation $\mathcal{T}$--- fails to obtain weak recovery  with $\mathcal{O}(d)$ samples (see Thms. 1.3 and 1.4 in \cite{Arous2021}).
The converse question is interesting: For non-emtpy $T^{\star}$, can GD recover $T^{\star}$ with just $\mathcal{O}(d)$ steps? In the absence of any transformations of the labels, this remains false for online SGD \citep{Arous2021,abbe22a}, unless
the {\it information} exponent of $g$ is $1$. The situation is different when reusing batches: \cite{dandi2024benefits} indeed showed that full batch GD on a two-layer neural network implicitly applies transformation $\mathcal{T}$ to the labels. The learnable subspace in two iterations with $\mathcal{O}(d)$ sample complexity then {\it exactly coincides} with that of $T^{\star}$ in Lemma \ref{lem:opt_t}, up to the restriction to polynomial transformations. A two-layer network with re-use of batches can thus recover the full trivial subspace efficiently
for any $\alpha > 0$.

\section{Phase transitions (\texorpdfstring{$\alpha_c\!>\!0$}{})}
\label{sec:stability}

When the trivial subspace $T^{\star}$ from Def. \ref{def:trivial} is empty, $\bMM=\bzero$ is a fixed point of state-evolution, thus AMP starting from random initialization fails to recover any subspace of $W^\star$ in any {\it finite} number of steps. However, there may exist directions where recovery requires only an infinitesimal additional side-information. To contrast with the trivial subspace, we refer to the directions learnable for arbitrarily small side-information as the \emph{easy directions}.

The learnability of easy subspaces is characterized by AMP's stability, which crucially depends on the sample complexity $\alpha \!=\! n/d$. The stability of $\bMM=\bzero$ is studied by linearising the state evolution \eqref{eq:replica_equation} around $\delta \bMM$ with $\norm{\delta \bMM} \approx 0$
\begin{equation}
F(\bMM) \approx \alpha\mathcal{F}(\delta\bMM) + \mathcal{O}(\norm{\delta\bMM}^2)\,, \nonumber 
\end{equation}
where $\mathcal{F}(\delta\bMM)$ is a linear operator on the cone $\mathcal{S}^+_p$ of PSD matrices of dimension $p$:
\begin{equation} 
\label{eq:stab_theorem}
{
    \!\!\!\! \! \mathcal{F}(\bMM) \defeq \mathbb{E}_{Y}\left[ \partial_{\bomega} \bg_{\rm out}(Y,\bzero,\bI_p)\, \bMM\partial_{\bomega} \bg_{\rm out}(Y,\bzero,\bI_p)^\top  \right].}
\end{equation}
  The above linearization demarcates a threshold on $\alpha$, below which the AMP remains stable:
\begin{lemma}\label{lem:stab_0}
    (Stability of the uninformed fixed point). \label{thm:unin:stability}
    If $\bMM=\bzero\in\mathbb{R}^{p\times p}$ is a fixed point, then it is an {\bf unstable} fixed point of \cref{eq:replica_equation} if and only if $\norm{\mathcal{F}(\bMM)}_F > 0$ and $n>\alpha_c d$, where the critical sample complexity $\alpha_c$ is:
    \begin{equation} 
    \label{eq:alpha_c}
        \frac{1}{\alpha_c} = \sup_{\bMM\in \mathbb{R}^{p\times p},\, \norm{\bMM}_F=1} \norm{\mathcal{F}(\bMM)}_F,
    \end{equation}
    with $\|\cdot\|_F$ denoting the Frobenius norm. Moreover, if $\mathcal{F}(\bMM) \neq 0$, there exists at-least one $\bMM^\star \neq 0 \in \mathcal{S}^+_p$ achieving the above supremum.
While
   if $\mathcal{F}(\bMM) = 0$, then $\bMM=\bzero$ is a stable fixed point for any $n = \Theta(d)$.
\end{lemma}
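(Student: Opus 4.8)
The plan is to reduce the local stability analysis of the nonlinear map $F$ at the fixed point $\bMM=\bzero$ to the spectral analysis of the linear operator $\alpha\mathcal{F}$ on the cone $\mathcal{S}^+_p$ of PSD matrices, and then to identify the relevant Perron eigenvalue with the variational quantity in \eqref{eq:alpha_c}. I would carry this out in three steps: (i) establish the second-order expansion $F(\bMM)=\alpha\,\mathcal{F}(\bMM)+O(\norm{\bMM}^2)$, uniformly for $\bMM\in\mathcal{S}^+_p$ near the origin; (ii) analyse $\mathcal{F}$ via Perron--Frobenius; (iii) invoke the standard dichotomy whereby the spectral radius of the linearisation governs local stability.

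For step (i), I would start from the state-evolution update \eqref{eq:replica_equation}, written as $\bMM^{t+1}=G\!\big(\alpha\,\mathbb{E}\big[\bg_{\rm out}(Y,\sqrt{\bMM}\bxi,\bI_p-\bMM)\,\bg_{\rm out}(Y,\sqrt{\bMM}\bxi,\bI_p-\bMM)^\top\big]\big)$ with $Y=g(\sqrt{\bI_p-\bMM}\,\bZ+\sqrt{\bMM}\bxi)$, and use the fixed-point condition of \Cref{thm:unin:existence}, namely $\bg_{\rm out}(y,\bzero,\bI_p)=\bzero$ on the support of $Y$. Taylor-expanding $\bg_{\rm out}$ in its last two arguments around $(\bzero,\bI_p)$ yields $\bg_{\rm out}(Y,\sqrt{\bMM}\bxi,\bI_p-\bMM)=\partial_{\bomega}\bg_{\rm out}(Y,\bzero,\bI_p)\sqrt{\bMM}\bxi+O(\norm{\bMM})$, treating $\sqrt{\bMM}\bxi$ as a quantity of size $\norm{\bMM}^{1/2}$ (the matrix square root is not differentiable at $\bzero$, but only the crude bound $\norm{\sqrt{\bMM}\bxi}\le\norm{\bMM}^{1/2}\norm{\bxi}$ is needed). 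Forming the outer product and taking the expectation over $\bZ,\bxi$, the leading contribution is $\mathbb{E}\big[\partial_{\bomega}\bg_{\rm out}(Y,\bzero,\bI_p)\sqrt{\bMM}\,\bxi\bxi^\top\sqrt{\bMM}\,\partial_{\bomega}\bg_{\rm out}(Y,\bzero,\bI_p)^\top\big]$; substituting $g(\bZ)$ for $Y$ (an $O(\norm{\bMM}^{1/2})$ perturbation) and using $\mathbb{E}_{\bxi}[\bxi\bxi^\top]=\bI_p$ produces exactly $\mathcal{F}(\bMM)$ from \eqref{eq:stab_theorem}. The a priori $O(\norm{\bMM}^{3/2})$ remainder cancels: each such term carries an odd power of $\bxi$ (a bare $\sqrt{\bMM}\bxi$ against an even factor, or the first-order dependence of $Y$ on $\sqrt{\bMM}\bxi$ against $\bxi\bxi^\top$), hence has vanishing Gaussian expectation, leaving a genuine $O(\norm{\bMM}^2)$ error. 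Since $G(\bX)=\bX-\bX^2+\cdots$ near $\bX=\bzero$, this expansion survives the outer $G$, giving $F(\bMM)=\alpha\,\mathcal{F}(\bMM)+O(\norm{\bMM}^2)$. This uses standard regularity of the channel $P(y\mid\bz)$ making $\bg_{\rm out}$ twice differentiable near $(\bzero,\bI_p)$.

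For step (ii), I would observe that $\bg_{\rm out}(y,\bomega,\bV)=\nabla_{\bomega}\log\!\int\mathcal{N}(\bz;\bomega,\bV)P(y\mid\bz)\,\dd\bz$, so $A_Y\defeq\partial_{\bomega}\bg_{\rm out}(Y,\bzero,\bI_p)$ is a Hessian and hence symmetric. Consequently $\mathcal{F}(\bMM)=\mathbb{E}[A_Y\bMM A_Y]$ is a (completely) positive linear map on the space of symmetric $p\times p$ matrices, it is self-adjoint for the Frobenius inner product (because $A_Y=A_Y^\top$), and it maps the self-dual cone $\mathcal{S}^+_p$ into itself. By the spectral theorem, $\norm{\mathcal{F}}_{\mathrm{op}}=\rho(\mathcal{F})=\sup_{\norm{\bMM}_F=1}\norm{\mathcal{F}(\bMM)}_F$, and Perron--Frobenius (Krein--Rutman) for the proper cone $\mathcal{S}^+_p$ guarantees that $\rho(\mathcal{F})$ is an eigenvalue with an eigenvector $\bMM^\star\in\mathcal{S}^+_p$; hence the supremum over the cone equals the supremum over all symmetric matrices and equals $\rho(\mathcal{F})$, which we take as $1/\alpha_c$. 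Uniqueness of $\bMM^\star$ on the unit Frobenius sphere follows from simplicity of the Perron eigenvalue when $\mathcal{F}$ is irreducible, i.e. when the family $\{A_Y\}$ has no nontrivial common invariant subspace; the reducible case decomposes into invariant blocks and one restricts to the block attaining the maximum. Step (iii) is then routine: for $\alpha<\alpha_c$ the estimate $\norm{F(\bMM)}_F\le(\alpha\rho(\mathcal{F})+C\norm{\bMM}_F)\norm{\bMM}_F$ makes $F$ a contraction on a small ball intersected with $\mathcal{S}^+_p$, so $\bMM^t\to\bzero$ --- stability; for $\alpha>\alpha_c$, initialising at $\bMM^0=\epsilon\bMM^\star$ and tracking $\lin{\bMM^t,\bMM^\star}$, self-adjointness gives $\lin{\bMM^{t+1},\bMM^\star}=\alpha\rho(\mathcal{F})\lin{\bMM^t,\bMM^\star}+O(\norm{\bMM^t}^2)$, which grows geometrically until it leaves the neighbourhood --- instability; the marginal case $\alpha=\alpha_c$ is consistent with the strict inequality in the statement. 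The \emph{moreover} clause is the degenerate case $\mathcal{F}\equiv\bzero$, equivalently $\sup_{\bMM}\norm{\mathcal{F}(\bMM)}_F=0$: then $F(\bMM)=O(\norm{\bMM}^2)$, so $\norm{\bMM^{t+1}}\le C\norm{\bMM^t}^2$ forces $\bMM^t\to\bzero$ for every $\alpha=\Theta(1)$, and any transition is deferred to the higher-order, $O(\log d)$-iteration regime.

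I expect the main obstacle to be the bookkeeping in step (i): justifying that the non-smoothness of $\bMM\mapsto\sqrt{\bMM}$ at the origin does not spoil the quadratic remainder, verifying that the odd-$\bxi$-moment cancellations eliminate \emph{every} potential $O(\norm{\bMM}^{3/2})$ contribution --- including those coming from the $\bMM$-dependence of the law of $Y$ --- and pinning down the precise regularity of $P(y\mid\bz)$ under which the expansion and the passage from linearisation to nonlinear stability are valid. By contrast, the Perron--Frobenius part is standard once self-adjointness and positivity are in place; its only delicate point is the uniqueness of $\bMM^\star$, which genuinely requires irreducibility of $\mathcal{F}$.
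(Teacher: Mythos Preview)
Your proposal is correct and follows essentially the same approach as the paper: linearise $F$ at $\bMM=\bzero$ to obtain $\alpha\mathcal{F}$ (the paper's Lemma~B.1), then invoke Krein--Rutman/Perron--Frobenius on the cone $\mathcal{S}^+_p$ to identify $1/\alpha_c$ with the top eigenvalue and obtain a PSD extremiser $\bMM^\star$. Your treatment is in fact more careful than the paper's in two respects: you explicitly use that $A_Y=\partial_{\bomega}\bg_{\rm out}(Y,\bzero,\bI_p)$ is the Hessian of a log-partition function and hence symmetric, which makes $\mathcal{F}$ self-adjoint and lets you equate the operator norm with the spectral radius via the spectral theorem rather than relying on Krein--Rutman alone; and you correctly flag that uniqueness of $\bMM^\star$ genuinely requires irreducibility of $\mathcal{F}$ (no common invariant subspace for $\{A_Y\}$), a hypothesis the paper asserts without comment.
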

The above result follows from a generalization of the Perron-Frobenius theorem to the cone $\mathcal{S}^+_p$; see \Cref{appendix:proof_stability} for a proof. Analogously, the operator $\mathcal{F}(\bMM)$ allows us to define the stability threshold along a specific direction $\vec{v} \in\mathbb{R}^{p}$. We define the \emph{easy subspace} as containing the directions where $\bMM=\bzero$ becomes unstable for some $\alpha < \infty$:
\begin{definition}[Easy subspace $E^{\star}$]
\label{def:easy}
    Let $H^*_E$ be the subspace spanned by directions $\vec{v} \in\mathbb{R}^{p}$ such that:
    \begin{equation}
        \vec{v}^\top \partial_{\bomega} \bg_{\rm out}(Y,\bomega=\bzero,\bV=\bI_p)\vec{v} = 0,
    \end{equation}
almost surely over $Y$. We define the easy subspace $E^{\star}$ as the orthogonal complement of $H^{\star}_E$. 
\end{definition}

Modeling side information as additional observations:
\begin{equation}
    {\bf S} = \sqrt{\lambda} {\bf W^\star} +\sqrt{1-\lambda}\bZ, 
    \label{ref:side_info}
\end{equation}
where $\bZ\!\in\!\R^{p \times d}$ has independent entries  $Z_{ij}\!\sim\! \mathcal{N}(0,1)$, we can now  state our main theorem:
\begin{theorem}\label{thm:stab_init}
Let $\bMM_d^{t} \coloneqq \sfrac{1}{d}\hat{\bW}^{t} {\bW^{\star}}^{\top}$ denote the model-target overlap matrix at any finite time $t$.
Suppose that $T^\star={0}$ and consider the AMP algorithm  with the Bayes-optimal choice of $f_t,g_t$. Then, with high probability as $d \rightarrow \infty$:
\begin{itemize}[leftmargin=*, noitemsep,wide=1pt]
    \item [(i)]For $\alpha \geq \alpha_c$, $\exists \delta >0$ such that for any sufficiently small $\lambda$ ,$\bMM_d^{t} \succ \delta \bMM^\star$ for $t = \mathcal{O}(\log \sfrac{1}{\lambda})$, where $\bMM^\star$ is any of the extremizers defined in Lemma \ref{lem:stab_0}. Furthermore, there exists an $\alpha \geq \alpha_c$ and a $\delta >0$ such that $\bMM_d^{t} \succ \delta \bMM_{E^*}$ in $t = \mathcal{O}(\log \sfrac{1}{\lambda})$ iterations, where $\bMM_{E^\star} \in \mathcal{S}^+_p$ spans $E^\star$. 
    \item [(ii)] For $\alpha < \alpha_c$ however, $\bMM^{t}_d=\bzero$ is asymptotically stable i.e. there exist constants $\lambda'<1$ and $C>0$ such that for $\lambda <\lambda'$, $\sup_{t \geq 0} \norm{\bMM^{t}_d} \leq C\sqrt{\lambda}$. 
\end{itemize}
\end{theorem}
Concretely, this implies that for $\alpha < \alpha_c$, not only does AMP fail to find any pertinent directions, but it also fails to improve on the small side-information. For $\alpha > \alpha_c$, however, AMP will develop a growing overlap along a non-empty subspace starting with {\it arbitrarily small} (but finite) side information. Based on the optimality of AMP amongst first-order methods for any finite number of iterations \citep{celentano20a,montanari2022statistically}, $\alpha_c$ thus marks the onset of \emph{computational phase transition} (see S.I. \ref{app:comp_transitions}) for weak learnability in the first-order methods class. 

Note that in Thm. \ref{thm:stab_init} the side information {\rm snr} $\lambda>0$ can be arbitrarily small, but must remain $\cO(1)$. Based on a vast array of evidence from statistical physics literature, we are actually claiming a {\it stronger result}, and believe the following conjecture to be exact:
\begin{conjecture}\label{thm:AMP_random}
AMP initialized randomly will find a finite overlap with the easy directions for $\alpha>\alpha_c$  in ${\cal O}(\log d)$ steps, without side information.
\end{conjecture}
This can be justified by the heuristic $\lambda = \mathcal{O}(\sfrac{1}{\sqrt{d}})$ (as this is the scaling of the correlation with the ground truth at initialization) in Thm. \ref{thm:stab_init}. While is not strictly covered by the state evolution theorem that allow only a finite number of steps, it is however well obeyed in practice in our simulations (see Fig.\ref{fig:dynamics}) and it is a well known observation in many message-passing works that AMP (or belief propagation) find the fixed point after few iterations from a random start (see e.g. \cite{decelle2011inference,zdeborova2016statistical,Barbier2019}).

A rigorous proof, though, remain a difficult open problem that require a non-asymptotic control of the AMP state evolution, see \citep{rush2018finite,li2022non,li2023approximate}
for some recent progress in the context of $\mathbb{Z}_2$ synchronization where the convergence from random start has been established, corroborating the heuristic identification $\lambda = \cO(\sfrac{1}{\sqrt{d}})$ in this case. An alternative possible strategy to provide a recovery algorithm in $\cO(\log(d))$ steps would be to follow \citep{krzakala2013spectral,maillard2022construction} and linearize AMP to reach a power iteration method (with a matrix similar to the one discussed in 
Lemma \ref{lem:opt_t}) so that the resulting operator can be studied with random matrix theory \citep{guionnet2023spectral,mondelli2021approximate}, as was done for the single-index model in \citep{mondelli2018fundamental,lu2020phase,maillard2022construction}. 
We leave these, and the proof of conjecture $1$, open for further studies.

The expression for the weak recovery threshold Eq.(\ref{eq:alpha_c}) in  Lemma \ref{thm:unin:existence} allows to generalise the single-index expression from \citep{mondelli2018fundamental, Barbier2019, Maillard2020}. Indeed, for $p=1$, we have $\partial_{\omega} g_{\rm out}(y,0,1) = \mathbb{E}[{\rm He}_2(Z)|Y=y]$, and therefore $\alpha_c = \mathbb{E}_{Y}\left[\mathbb{E}[{\rm He}_2(Z)|Y]^2\right]^{-1}$ or equivalently
\begin{equation}
\label{eq:weakrev:glm}
    \frac{1}{\alpha_c} = \int_{\mathbb{R}}\dd y\frac{\left[\int_{\mathbb{R}} \frac{\dd z}{2\pi}e^{-\frac{1}{2}z^{2}}(z^{2}-1)P(y|z)\right]^{2}}{\int_{\mathbb{R}} \frac{\dd z}{2\pi}e^{-\frac{1}{2}z^{2}}P(y|z)}
\end{equation}
which is exactly the threshold in \citep{Barbier2019,mondelli2018fundamental,lu2020phase,maillard2022construction,damian2024computationalstatistical}.

We now illustrate \Cref{thm:unin:stability} in a two examples of interest: 

(i)  {\bf The monomial $g(\bz) = z_{1}\dots z_{p}$} with $p>1$ can always be learned with $\alpha > \alpha_{c}(p)$ large enough \citep{chen20a}. For instance, we have $\alpha_{c}(2) \approx 0.5937$, $\alpha_{c}(3)\approx 3.725$ and $\alpha_{c}(4)\approx 4.912$. In \Cref{app:examples} we derive an analytical formula for $\alpha_{c}(p)$ for arbitrary $p$, and show that $\alpha_{c}(p)\sim p^{1.2}$ for large $p$.

(ii) The embedding of the {\bf sparse parity functions}:
\begin{equation}\label{2-p}
   g(\bz) = \prod_{k=1}^{p}{\rm sign}(z_k)\,.
\end{equation}
As this is invariant under permutations of the indices, Lemma \ref{thm:unin:stability} implies the existence of a computational phase transition. In S.I. Sec.\ref{sec:app:parity} we compute analytically the critical value $\alpha_c(p)$: $p\!=\!1$ is equivalent to phase retrieval and $\alpha_c(1) \!=\! \sfrac{1}{2}$. For $p\!=\!2$, we show that $\alpha_c(2) \!=\! \sfrac{\pi^2}{4}$, while $\alpha_c(p)\!=\!+\infty$ for $p\!\geq\!3$. This is illustrated in Fig.\ref{fig:z1_sign} (left) for 2-sparse parity.

\begin{figure*}[t]
    \centering
    \includegraphics{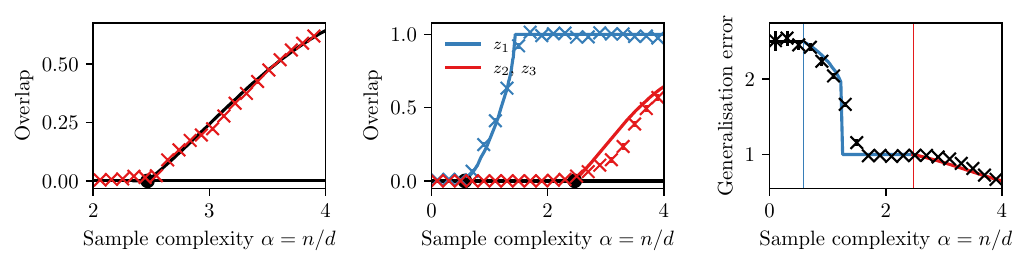}
    \caption{Weak learnability phase transitions for $g(z_{1},z_{2})\!=\!{\rm sign}(z_{1}z_{2})$ (left) and $g(z_{1},z_{2},z_{3})=z_{1}^{2}+{\rm sign}(z_{1}z_{2}z_{3})$ (center and right). Given the permutation symmetry in the models, we display the optimal permutation of the overlap matrix elements reached by AMP.
    ({\bf Left}): Overlaps with the two directions $\sfrac{1}{2}(M_{11}+M_{22})$ as a function of the sample complexity $\alpha=\sfrac{n}{d}$, with the phase transition at $\alpha_c = \pi^2/4$.  The solid black line is the asymptotic theory from state evolution while crosses are averages over $72$ AMP runs with $d\!=\!500$.
    ({\bf Center}): Overlaps with the first direction $|M_{11}|$ ({\color{blue} blue}), and with the second and third one $\sfrac{1}{2}(M_{22}+M_{33})$ ({\color{red} red}) as a function of the sample complexity $\alpha=\sfrac{n}{d}$. Solid lines are the state evolution curves \cref{eq:replica_equation}, and crosses/dots AMP runs  with $d=500$  averaged over $72$ seeds. All other overlaps are zero (black). The two black dots indicate the critical thresholds at $\alpha_1\approx 0.575$ and $\alpha_2=\pi^2/4$. ({\bf Right}) Corresponding generalization error as a function of $\alpha=n/d$. The figure can be reproduced using the code provided (See also \Cref{sec:app:numerics}).}
    \label{fig:z1_sign}
\end{figure*}
The sparse-parity function is a classic example of a computationally hard function in theoretical computer science \citep{Blum1994, Blum2003}.\footnote{Noiseless parity is actually efficiently learnable by Gaussian elimination \citep{Blum1994}. Here we consider computational efficiency in a noise-tolerant sense.} Therefore, it should come as no surprise that $\alpha_{c}(p)=\infty$ for $p\geq 3$, providing a concrete example of what a hard subspace (i.e. that cannot be learned with $n=\Theta(d)$ samples) looks like for AMP. Similar to the discussion of trivial directions in Lemma \ref{lem:opt_t}, we can invoke optimality of AMP to translate our result on the optimal denoiser $y\mapsto \bg_{\rm out}(y,\bzero, \bI_{p})$ to a statement for queries on general label pre-processing transformations:
\begin{lemma}
\label{lemma:He2}
For any $\vec{v} \in H^\star_E$ and any measurable transformation $\mathcal{T}:\R \rightarrow \R$ we have 
\begin{equation}
    \lim_{d \rightarrow \infty }\Ea{\mathcal{T}(y) {\rm He}_2(\langle \vec{v}^\top \bW^\star, \vec{x}\rangle)}=0\,.
\end{equation}

 Note that the expectation is with respect to the distribution of the labels $y$ (not the effective problem from eq.(\ref{eq:effective})).
\end{lemma}

\paragraph{Remarks on GD ---} 
Analogous to Lemma \ref{lem:opt_t},  Lemma \ref{lemma:He2}, implies that the condition $T^\star, E^\star$ being empty (i.e. $\alpha_c=\infty$ is equivalent to the generative exponent \citep{damian2024computationalstatistical} being strictly larger than $2$. The results in \cite{Arous2021,abbe23a} then imply that
even after applying any transformation to the output (as allowed in SQ) online-SGD requires at least $\Omega(d^2)$ samples to achieve weak-rovery, since online-SGD with squared loss requires $\Omega(d^{\kappa-1})$ samples/iterations for weak-recovery of a target function with leap/information exponent $\kappa$ (Theorems 1.3 and 1.4 in \cite{Arous2021}). Furthermore, for single-index models satisfying the condition in Lemma \ref{lemma:He2} (i.e $\alpha_c \!=\! \infty$), the SQ lower-bound predicts a sample complexity requirement of at-least $\Omega(d^{3/2})$ \citep{damian2024computationalstatistical}. Learning such functions with $\mathcal{O}(d)$ samples is thus conjectured to be algorithmically-hard.

It is again a natural question to ask for non-empy $E^\star$, how many samples and iterations do neural networks trained with GD require for achieving weak-recovery. In the absence of label transformations, weak-recovery along $E^\star$ requires $\Omega(d^2)$ unless the information exponent is at-most $2$ \citep{Arous2021,abbe23a}. However, in the presence of transformations, the sample complexity for weak-recovery along $E^\star$ can be reduced to $d \operatorname{polylog} (d)$. \cite{chen2022hardness} already showed that any polynomial can be learned with $d \operatorname{polylog} (d)$ iterations using an SQ algorithm. More-recently, 
\citep{arnaboldi2024repetita,lee2024neural} suggested that $E^\star$ can be recovered in $d \operatorname{polylog} (d)$ steps through modified gradient-based algorithms re-using data. 
It remains an open question, however, to see if $\alpha_d d$ data are enough for an agnostic GD approach.

\section{The grand staircase}
\label{sec:hierarchical}

\begin{figure*}
    \centering
    \includegraphics{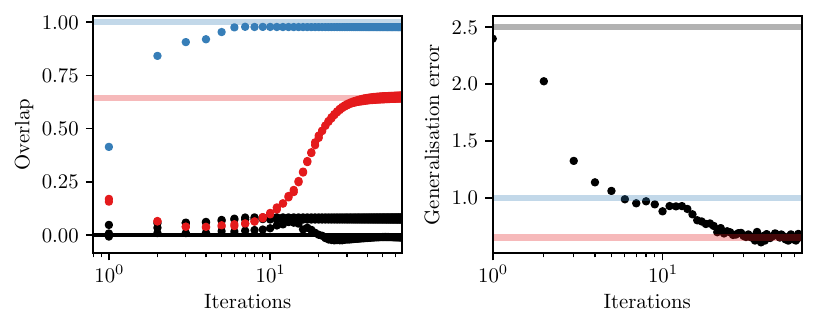}
    \caption{Trajectories of a single finite-size run of AMP with $d = 500$ at $\alpha = 4$ for $g(z_1,z_2,z_3) = z_1^2+{\rm sign}(z_1z_2z_3)$. (\textbf{Left}) Evolution of the overlaps. We display $M_{11}$ in {\color{blue}blue}, $\nicefrac{1}{2}(M_{22}+M_{33})$ in {\color{red}red}, and the off-diagonal overlaps in black. (\textbf{Right}) Evolution of the generalisation error.}
    \label{fig:dynamics}
\end{figure*}

The present classification of \emph{trivial}, \emph{easy} or \emph{hard} directions, based upon the notion of weak learnability, does not say what can be learned {\it after} a given subspace is learned.
We now address this question. Suppose that the estimator $\hat{\bW}^{t}$ has developed an overlap along a subspace belonging to the span of $\bW^{\star}$, resulting in a non-zero overlap $\bMM^t \succ 0$. From that point, the main difference with respect to the previous discussion is that the variable $\bomega$ in the linear operator $\mathcal{F}$ defined in eq.(\ref{eq:stab_theorem}) becomes non-zero (since it is distributed as $\bomega = \sqrt{\bMM}\bxi$). Crucially, this changes the span of $\mathcal{F}(\bMM)$ and hence the stability condition in Lemma \ref{lem:stab_0}. In particular, as we show next, learning some directions might facilitate learning larger subspaces. This  is reminiscent of the specialisation transition in committee machines \citep{saad_1996,aubin2020exact} and of the staircase  for SGD \citep{abbe22a}:
\begin{definition}\label{def:denoise_coup}
Let $U\subset \R^p$. We define $H_{T}^\star(U)$  to be the subspace spanned by $\bv \in U^\perp$ such that 
\begin{equation}
    \bv^\top \bg_{\rm out}(Y,\sqrt{\bMM}_{U}\bxi,\bI-\sqrt{\bMM}_{U}) = 0
\end{equation}
almost surely over $\bxi\sim\mathcal{N}(\bzero, \bI_{p})$ and $Y$ for any $\bMM_U \in \mathcal{S}^+_p$ such that $\operatorname{span}(\bMM_U) = U$.
We define the ``trivially-coupled" subspace $T^\star_U$ for $U$ as the orthogonal complement of $H_{T}^\star(U)$. 

Analogously, let $H^\star_{E}(U)$ be the subspace spanned by directions $\bv \in U^\perp$ such that 
\begin{equation}
    \bv^\top\partial_{\bomega} \bg_{\rm out}(Y,\sqrt{\bMM_U}\bxi,\bI-\sqrt{\bMM}_{U})\bv = 0
\end{equation}
almost  surely over $\bxi$ and $Y$ for any $\bMM_U \in \mathcal{S}^+_p$ such that $\operatorname{span}(\bMM_U)=U$.

When $\bMM_U$ is additionally a fixed point of $\mathcal{F}_{\bMM}$, one can linearise $\mathcal{F}_{\bMM}$ along the orthogonal complement of $U$.
We define the easy-coupled subspace $E^\star_U$ for $U$ as the orthogonal complement of $H^\star_{E}(U)$. Next, suppose that $\bMM_U \in \mathcal{S}^+_p$ with $\operatorname{span}(\bMM_U)=U$ is a fixed-point of $\mathcal{F}_{\bMM}$.
Let $\mathcal{F}_{\bMM_U}$ denote the linearization of $F(\bMM)$ along the orthogonal complement $U^\perp$ at $\bMM=\bMM_U$. We define the grand staircase threshold $\alpha_{\text{gst}}(\bMM_U)$ at $\bMM=\bMM_U$ as $\alpha_{\text{gst}}(\bMM_U) = (\sup_{\bMM^\perp\in U^\perp} \norm{\mathcal{F}_{\bMM_U}(\bMM^\perp)}_F)^{-1}$.
\end{definition}

The definitions above generalise the notions of \emph{trivial} and \emph{easy} subspaces in Def. \ref{def:trivial} \& \ref{def:easy} conditionally on a subspace $U$ that has been previously learned, and characterise the directions whose recovery is enabled upon learning the subspace $U$.
Concretely, upon developing an initial overlap along $U$, the directions in $T^\star_U, E^\star_U$ can be recovered analogous to the recovery of $T^\star, E^\star$ in Theorems \ref{thm:triv}, \ref{thm:stab_init}.
For brevity, we focus the present discussion on the recovery of $E^\star_U$
with vanishing side information, and discuss settings involving the recovery of $T^\star_U$ in Appendix \ref{app:triv_coup}.
\begin{proposition}(informal)\label{prop:hid}
Let $U \subseteq \mathbb{R}^p$ be a subspace such that $E^\star_U$ is non-empty.
Consider AMP iterates with the Bayes-optimal choice of $f_t,g_t$ for sufficiently small $\lambda>0$.  Suppose that $\bMM_d^{t} \coloneqq \sfrac{1}{d}\hat{\bW}^{t} {\bW^{\star}}^{\top}$ is  an approximate fixed point of $F(\bMM)$ in \cref{eq:replica_equation} such that $\bMM_d^{t} \approx \bMM_U$  where $M_U$ spans  $U$. 
Then:
\begin{itemize}[leftmargin=*, noitemsep,wide=1pt]
    \item For $\alpha>\alpha_{\text{gst}}(\bMM_{U})$, AMP recovers $M^*_{M_U}$ in additional $\mathcal{O}(\log \frac{1}{\lambda})$  steps for arbitrarily small $\lambda$, where $M^*_{M_U}$  denotes any matrix in $\mathcal{S}^+_p$ achieving the supremum $\sup_{\bMM^\perp\in U^\perp} \norm{\mathcal{F}_{\bMM_U}(\bMM^\perp)}_F$ as per Definition (\ref{def:denoise_coup}).
    \item For $\alpha<\alpha_{\text{gst}}(\bMM_{U})$ and sufficiently small $\lambda$, AMP remains at the approximate fixed point $\bMM_U$ and fails to gain weak-recovery along $U^\bot$.
\end{itemize}
\end{proposition}
When the easy subspace $E^\star$ defined by Def. \ref{def:easy} and $E^\star_{E^\star_{U}}$ are non-empty, we further show in S.I. \ref{app:hid} that for large enough $\alpha \geq \alpha_c$ AMP recovers $E^\star_{U}$ by first reaching an approximate fixed point spanning $E^\star$ and subsequently ``escaping" along $E^\star_{E^\star_{U}}$ as described above. 
A concrete example of a function displaying this phenomenon is a linear combination between \emph{hard} parity function and an \emph{easy} polynomial:
\begin{equation}
\label{lastexample}
   g(z_1,z_2,z_3) = z_1^2 + {\rm sign}(z_1 z_2 z_3)
\end{equation}
The sign part is a sparse parity with $p=3$, which cannot be learned with $n=\Theta(d)$ samples, but the quadratic polynomial $z_1^2$ component in the function allows weak-recovery of the first component i.e. $U={(1,0,0)}$  as long as $\alpha>1/2$. Hence, conditionally on $U$ the effective multi-index model becomes ${\rm sign}(z_2 z_3)$, which as discussed in Sec.\ref{sec:stability} is an \emph{easy} function. Figure \ref{fig:z1_sign} illustrates this: first, $z_{1}$ is learned at $\alpha_1\approx0.575$. 
Then, for {\it larger} value when $\alpha>\alpha_2$, all directions are learned (see figure \ref{fig:z1_sign}). Knowing $z_1$ makes the \emph{hard} 3-parity an \emph{easy} 2-parity problem.

It is easy to construct a multi-index model where AMP will iterate over any number of such plateaus. For instance, consider the model:
\begin{equation}
    g(\bz) = z_1^2 + {\rm sign}(z_1 z_2 z_3) + {\rm sign}(z_3 z_4 z_5) + \ldots\,.
\end{equation}
After the first plateau to learn $z_1$, there will be one for  $z_2$ and $z_3$, then for $z_4$, $z_5$. Another example (see  Fig.~\ref{fig:committee} in the S.I.) is the neural net target $g(z_{1},z_{2},z_{3})\!=\!{\rm sign}(z_{1})\!+\!{\rm sign}(z_{2})\!+\!{\rm sign}(z_{3})$. At first only the one-dimensional direction spanned by $z_1\!+\!z_2\!+\!z_3$ is learned for any $\alpha>0$. This is simply a linear approximation of the function. Only at $\alpha\!\approx\!4.3$ does the network learns the other directions. Known as the {\it the specialisation transition}, this provide a very early example of such sequential learning \citep{saad_1996,aubin2020generalization}.

We can characterize a sequence of growing subspaces that AMP recovers iteratively, with vanishing side information, for large enough $\alpha$, as follows:

\begin{definition}\label{def:stair}
    Let $E^\star_1 = E^\star \cup T^\star$, and inductively define for $k \in \mathbb{N}$:
    \begin{equation}
E^\star_{k+1}=E^\star_{k} \cup E^\star_{E^\star_{k}} \cup T^\star_{E^\star_{k}}
    \end{equation}
The dimensionality of the subspaces in the above sequence is non-decreasing and therefore, $\exists k$ such that $E^\star_{E^\star_{k}}=\emptyset$. We denote this maximal subspace by $E^\star_{\text{gst}}$ and say that $g$ is grand-staircase learnable if $E^\star_{\text{gst}} = \mathbb{R}^p$.
\end{definition}
Using proposition \ref{prop:hid} and certain simplifying properties of the Bayes-AMP dynamics, we can show (see S.I.) that for large enough $\alpha$, AMP recovers $E^\star_{\text{gst}}$:
\begin{theorem}\label{thm:staircase}
For any $\delta$, there exists $\alpha>\alpha_c$ such that with arbitrarily small side-information $\sqrt{\lambda}$, with high probability as $d \rightarrow \infty$, $\bMM_d^{t} \succ \delta M_{E^\star_{\text{gst}}}$ in $t = \mathcal{O}(\log \sfrac{1}{\lambda})$ iterations, where $M_{E^\star_{\text{gst}}} \in \mathcal{S}^+_p$ spans $E^\star_{\text{gst}}$.
\end{theorem}

\paragraph{Final remarks  ---} 
We stress that grand staircase functions are {\it not} equivalent to the staircase ones of \citep{abbe22a,abbe23a}. Consider for instance
\begin{equation}
    g(\bz) = \rm He_4(z_1) + {\rm sign}(z_1 z_2 z_3)\,.
\end{equation}
As for (\ref{lastexample}), it can be AMP-learned  with $\mathcal{O}(d)$ samples by first recovering the direction corresponding to $z_1$. Unlike  (\ref{lastexample}), however, which is ``leap 2" under the CSQ staircase classification, $g(\bz)$ is ``leap 3". The CSQ lower-bound therefore implies a sample complexity-requirement of $\Omega(d^{1.5})$ with online-SGD learning it in $\mathcal{O}(d^2)$ samples. 

We expect  {\it grand staircase functions} to be  efficiently learnable by two-layers nets if data reusing is allowed (while only {\it standard staircase} ones are learned with single-pass SGD).  \cite{arnaboldi2024repetita} recently provided strong evidence that they can indeed be learned in $\cO(d \log d)$ steps. The  analysis of \cite{joshi2024complexity} is  compatible with this conjecture. We hope our study will spark more work in this direction.

\section*{Acknowledgements}
The authors would like to thank Joan Bruna, Theodor Misiakiewicz, Luca Pesce, and Nati Srebro for stimulating discussions. This work was supported by the Swiss National Science Foundation
under grants SNSF SMArtNet (grant number 212049) and SNSF OperaGOST (grant number 200390) and the Choose France - CNRS AI Rising Talents program.

\bibliographystyle{plainnat}
\bibliography{bibliography}

\newpage 
\appendix
\section*{\huge Supplemental material}

\section{Bayes optimality of AMP} \label{sec:amp_opt}

We present in Alg.\ref{alg:gamp} a detailed version of the AMP algorithm described in the text using the explicit notation of \cite{Aubin2018}. While it was derived by \cite{Aubin2018} as a limit of Belief-Propagation, it can also be directly infered from a multi-dimensional version of the standard AMP/GAMP for linear models \cite{rangan2011generalized,javanmard2013state,zdeborova2016statistical}.  As mention in the main, AMP \cref{alg:gamp} is tailored to sample by estimating the posterior marginals \cref{eq:def:posterior}.  AMP estimates the mean $\omega$ and variance $\mathbb{V}$ of $\mathbf{W}^*\mathbf{x}$ and the mean $\widehat{\mathbf{W}}$ and covariance $\widehat{\mathbf{C}}$ of $\mathbf{W}^*$ through a fixed point iteration. Note that the Onsager terms $\bV_i$ and $\bA_k$ can be made independent of the matrix by noting that the sum concentrate on the average of $\hat C$ and $\partial g$ respectively.

\begin{algorithm}[h]
   \caption{Multi-index AMP}
\begin{algorithmic}
    \label{alg:gamp}
   \STATE {\bfseries Input:} Data $\bX\in\mathbb{R}^{n\times d}$, $\by\in\mathbb{R}^{n}$
   
   \STATE Initialize $\hat{\bW}^{t=0}\in\mathbb{R}^{p\times d}$, $\hat{\bC}_{k}^{t=0} \in \mathcal{S}_{p}^{+}$ for $k\in[d]$, $\bg^{t=0} \in\mathbb{R}^{n\times p}$.
   \FOR{$t\leq T$}
   \STATE \textit{/* Update likelihood mean and variance}
   \STATE $\bV_{i}^{t} = \sum_{k=1}^{d}X^{2}_{ik}\hat{\bC}_{k}\in\mathbb{R}^{p\times p}$; \quad $\vec{\omega}_{i}^{t} = \sum_{k=1}^{d}X_{ik} \hat{\bW}_{k}^{t} - \bV_{i}^{t} \bg_{i}^{t-1} \in\mathbb{R}^{p}$, $i\in[n]$;  
   \STATE $\bg_{i}^{t} = g_{\rm{\rm out}}(y_{i}, \bomega_{i}^{t}, \bV_{i}^{t})\in\mathbb{R}^{p}$ ; \quad $\partial \bg_{i}^{t} = \partial_{\bomega}g_{\rm{\rm out}}(y_{i}, \bomega_{i}^{t}, \bV_{i}^{t})\in\mathbb{R}^{p\times p}$ ; $i\in [n]$
   \STATE \textit{/* Update prior first and second moments}
   \STATE $\bA_{k}^{t} = -\sum_{i=1}^{n}X_{ik}^{2} \partial \bg^{t}_{i}\in\mathbb{R}^{p\times p}$ ; \qquad $\bb_{k}^{t} = \sum_{i\in[n]} X_{ik}\bg_{i}^{t} + \bA_{k}^{t} \hat{\bW}_{k}^{t}$; \quad $k\in[d]$ 
   \STATE $\hat{\bW}_{k}^{t+1} = (\bI_{p} + \bA_{k}^{t})^{-1}\bb_{k}^{t}\in\mathbb{R}^{p}$ ;\qquad $\hat{\bC}_{k}^{t+1} =  (\bI_{p}+\bA_{k}^{t})^{-1}\in\mathbb{R}^{p\times p}$, \quad $k\in[d]$
   
   \ENDFOR
   \STATE {\bfseries Return:} Estimators $\hat{\bW}_{\rm amp}\in\mathbb{R}^{p\times d}, \hat{\bC}_{\rm amp}\in\mathbb{R}^{d\times p\times p}$
\end{algorithmic}
\end{algorithm}

Sampling is, a priori, computationally prohibitive in the high-dimensional limit of interest here. The first impressive feat of AMP, a simple iterative first-order algorithm, is thus its efficiency. The second remarkable property of AMP is that its optimality with respect to the Bayesian posterior \cref{eq:def:posterior} can be exactly characterised in the high-dimensional regime.  Indeed, \cite{Aubin2018} ---generalizing the earlier results of \cite{Barbier2019} on single index models--- proved the following  (formally, their proof was for the committee machine, but it applies {\it mutatis mutandis} to any multi-index models):
\begin{theorem}[Bayes-optimal correlation, Theorem 3.1 in \cite{Aubin2018}, informal] 
\label{thm:bo}

Let $(x_{i},y_{i})_{i\in[n]}$ denote $n$ i.i.d. samples from the multi-index model defined in \ref{def:model}. Denote by $\hat{\bW}_{\rm bo} = \mathbb{E}[W|X,y]\in\mathbb{R}^{p\times d}$ the mean of the posterior marginals \cref{eq:def:posterior}. 
Then, under Assumption \ref{ass:link} in the high-dimensional asymptotic limit where $n,d\to\infty$ with fixed ratio $\alpha = \sfrac{n}{d}$, the asymptotic correlation between the posterior mean and $\bW^{\star}$:
\begin{align}
    \bMM^{\star} = \lim\limits_{d\to\infty}\mathbb{E}\left[\frac{1}{d}\hat{\bW}_{\rm bo}{\bW^{\star}}^{\top}\right]
\end{align}
is the solution of the following $\sup\inf$ problem:
\begin{align}
\label{eq:freeenergy}
    \underset{\hat{\bMM}\in\mathcal{S}^{+}_{p}}{\sup}\underset{M\in\mathcal{S}^{+}_{p}}{\inf}\left\{-\frac{1}{2}\Tr \bMM\hat{\bMM}-\frac{1}{2}\log(\bI_{p}+\hat{\bMM})+\frac{1}{2}\hat{\bMM}+\alpha H_{Y}(\bMM)\right\}
\end{align}
where $H_{Y}(\bMM)=\mathbb{E}_{\bxi\sim\mathcal{N}(0,\bI_{p})}[H_{Y}(\bM|\bxi)]$, with $H_{Y}(\bMM|\bxi)$ the the conditional entropy of the effective $p$-dimensional estimation problem \cref{eq:effective}.
\end{theorem}
Note that the state evolution \cref{eq:replica_equation} is closely related to the $\sup\inf$ problem in \cref{eq:freeenergy}. Indeed, remarking that the update function $F$ in \cref{eq:replica_equation} is precisely the gradient of the entropy $H_{Y}$ in \cref{eq:effective}, one can show that state evolution is equivalent to gradient descent in the objective defined by \cref{eq:freeenergy} \citep{Aubin2018}. This non-trivial fact implies that whenever \cref{eq:freeenergy} has a single minima, AMP {\it optimally estimates} the posterior marginals, at least if its state evolution converges to this fixed point.

Finally, note that by construction $\hat{\bW}_{\rm bo}$ is the optimal estimator of $\bW^{\star}$ given the data $(X,y)$ (in the MMSE sense). Therefore, the rank of $M^{\star}$ defines the dimension of the statistically optimal subspace reconstruction at sample complexity $\alpha\coloneqq\sfrac{n}{d}$. The fact that AMP follows state evolution is proven for such problems in \citep{javanmard2013state,Gerbelot}. Specifically, we refer to the example 4.4 in \cite{Gerbelot} on Matrix-valued random variables.

\section{Bayes-AMP in the presence of side-information}\label{sec:side-inf}
The algorithm is easily generalized to the case with {\it side information} of the type eq.(\ref{ref:side_info}):
\begin{equation}
    {\bf S} = \sqrt{\lambda} W^\star +\sqrt{1-\lambda}Z, 
\end{equation}
The effect of the additional side-information is equivalent to a modification of the prior on $W$ to $P^S_W(W)$, defined as:

\begin{align*}
  P^S_W(\bW) &\propto P(\bS|\bW)P_W(\bW)={\cal N}(\sqrt{\lambda} {\bf S},\sqrt{1-\lambda}\mathbf{I})
\end{align*}

This is a simple consequence of Bayes theorem:

\begin{equation}
    P(\bW|\by,\bX,\bS) \propto P(\by,\bX,\bS|\bW)P_W(\bW)= P(\by,\bX|\bW)P(\bS|\bW)P_W(\bW)
\end{equation}

In turns, this change updates of  $\hat{\bW}$ and $\hat{\bC}$ in Algorithm \ref{alg:gamp} as described in \cite{Aubin2018}:

\begin{equation}
    \hat{\bW}_{k}^{t+1} = (\bI_{p}(1-\lambda) + \bA_{k}^{t})^{-1}(\bb_{k}^{t}(1-\lambda) + \sqrt{\lambda}\bA_{k}^{t}\bS_{k})\,,\qquad \hat{\bC}_{k}^{t+1} =  (\bI_{p}(1-\lambda)+\bA_{k}^{t})^{-1}\,.
\end{equation}

It is easily checked that the limit $\lambda \to 0$ gives back the equations in Algorithm \ref{alg:gamp}. Lemma \ref{thm:se} is then modified with $G(X)$ in equation \eqref{eq:replica_equation} being replaced by
\begin{equation}\label{eq:g_side_inf}
        G(X) = (X(1-\lambda)+\lambda) (1+X(1-\lambda))^{-1}\,.
\end{equation}

\section{Trivially-coupled Subspaces}\label{app:triv_coup}

In this section, we discuss and illustrate examples of $g$ where $T^\star_U \neq \emptyset$ for a learned subspace $U$. Our prototypical example throughout the discussion will be:
\begin{equation}
    g(z_1,z_2,z_3) = z^2_1+z^2_2+\text{sign}(z_1z_2z_3).
\end{equation}
Let $e_1,e_2,e_3$ denote the directions corresponding to components $z_1,z_2,z_3$ respectively.
It is easy to check that for the above example, $E^\star=\{e_1,e_2\}$ while $H^\star_E = \{e_3\}$.
Therefore, by Theorem \ref{thm:stab_init}, for large enough $\alpha$ the $\bMM^{t} \geq \delta M_E^\star$ where $ M_E^\star$ spans $z_1,z_2$ after $t =\mathcal{O}(\log \frac{1}{\lambda})$ iterations. Subsequently, since $T^\star_{E^\star} = \{e_3\}$, it gains an overlap along $e_3$ (independent of $\lambda$).
Unlike sequential learning of directions in $E^\star_U$, however, the time-steps for recovery of $\{e_1,e_2\}$ and $\{e_3\}$ (upto a threshold independent of $\lambda$) different only by a constant. The recovery of $e_3$ can therefore be interpreted as instantaneous w.r.t $\{e_1,e_2\}$. 

\section{Proofs of the main results}

In this section, we describe the proofs of our main results, starting with a \emph{Trivial} proof of the learning of Trivial subspaces (Theorem \ref{thm:triv}) and the form of linearization that will be utilized throughout the analysis.

\subsection{Theorem \ref{thm:triv} and Linearization}

Lemma \ref{thm:unin:existence} and Theorem \ref{thm:triv} are direct consequences of the following observation:

\begin{proposition}
    Let $\vec{v} \in \mathbb{R}^p$ be arbitrary.  Then, starting from $\bMM^0=0$, $\vec{v}^\top \bMM^1 \vec{v}>0$ if and only if:
\begin{equation}\label{eq:vspan}
\alpha\Ea{\vec{v}^\top\bg_{\rm out}\!\left(Y^{t},0,\bI_{p}\right)\bg_{\rm out}\!\left(Y^{t},0,\bI_{p}\right)^{\top}\vec{v}} >0
    \end{equation}
\end{proposition}

\begin{proof}
    The above proposition follows directly from the form of $F$ in Lemma \ref{thm:se} and the observation that the mapping $G(X)$ preserves the span of $X$.
\end{proof}
The expression in Equation \ref{eq:vspan} can be rewritten as:
\begin{equation}
\Ea{(\vec{v}^\top\bg_{\rm out}\!\left(Y^{t},0,\bI_{p}\right))^2}
\end{equation}
Since the operand inside the expectation is non-negative, the expectation is non-zero if and only if the operand vanishes identically. We therefore obtain Lemma \ref{thm:unin:existence} and Theorem \ref{thm:triv}.
 
\subsubsection{Linearization without side-information}
Our analysis relies on the following result:
\begin{lemma}\label{lem:lin_app}
Let $F(\bMM)$ be as defined in Lemma \ref{thm:se}
    \begin{equation}
   F(\bMM) \approx \alpha\mathcal{F}(\delta\bMM) + \mathcal{O}(\alpha\norm{\delta\bMM}_F^2),
\end{equation}
where $\norm{}_F$ denotes the Frobenius norm and $\mathcal{F}(\delta\bMM)$ is a linear operator on the cone $\mathcal{S}^+_p$ of PSD matrices of dimension $p$:
\begin{equation} 
    \mathcal{F}(\bMM) \defeq \,\mathbb{E}_{y}\left[ \partial_{\bomega} \bg_{\rm out}(y,0,\bI_p)\, \bMM\partial_{\bomega} \bg_{\rm out}(y,0,\bI_p)^\top  \right]\,,
\end{equation}
\end{lemma}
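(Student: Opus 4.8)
The plan is to Taylor-expand the state evolution map $F(\bMM) = G\big(\alpha\,\Phi(\bMM)\big)$ around $\bMM = \bzero$, where $\Phi(\bMM) \defeq \mathbb{E}\big[\bg_{\rm out}(Y^{\bMM},\sqrt{\bMM}\bxi,\bI_p-\bMM)\,\bg_{\rm out}(\cdots)^\top\big]$ and $G(\bX) = (\bI_p+\bX)^{-1}\bX$. First I would note that $G(\bX) = \bX - \bX^2 + O(\|\bX\|_F^3)$, so since $G(\bzero) = \bzero$ we have $F(\bMM) = \alpha\,\Phi(\bMM) + O(\alpha^2\|\Phi(\bMM)\|_F^2)$; the main content is therefore to show $\Phi(\bMM) = \mathcal{F}(\delta\bMM) + O(\|\delta\bMM\|_F^2)$ with $\delta\bMM = \bMM$, i.e. that $\Phi$ vanishes to first order unless we account correctly for the linear term. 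The key observation is that at $\bMM = \bzero$ the effective process \cref{eq:effective} reduces to $Y = g(\bZ)$ with $\bZ$ independent of $\bxi$, and $\bg_{\rm out}(y,\bzero,\bI_p) = \mathbb{E}[\bZ|Y=y]$, so that $\Phi(\bzero) = \mathbb{E}[\mathbb{E}[\bZ|Y]\mathbb{E}[\bZ|Y]^\top]$. Crucially, when $T^\star = \{0\}$ this term is zero; more to the point, for the linearisation we expand $\Phi(\sqrt{\bMM}\bxi\text{-dependence})$ treating $\bMM$ itself as the small parameter. The cleanest route is to write $\bomega = \sqrt{\bMM}\bxi$ and expand $\bg_{\rm out}(Y^{\bMM},\bomega,\bI_p-\bMM)$ in $\bomega$ and in the channel perturbation simultaneously.

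The central computation is the following. Expanding $\bg_{\rm out}(y,\bomega,\bI_p-\bMM)$ to first order in $\bomega$ around $\bomega=\bzero$ gives $\bg_{\rm out}(y,\bzero,\bI_p) + \partial_{\bomega}\bg_{\rm out}(y,\bzero,\bI_p)\,\bomega + O(\|\bomega\|^2 + \|\bMM\|_F)$, and simultaneously the label $Y^{\bMM} = g((\bI_p-\bMM)^{1/2}\bZ + \bMM^{1/2}\bxi)$ must be expanded — but here one uses the Nishimori-type structure: the effective process is designed so that, to the order needed, the contribution of the channel shift cancels against the $\bomega$-shift, leaving the dominant term. Then
\[
\Phi(\bMM) = \mathbb{E}_{\bxi}\mathbb{E}_{Y}\big[\partial_{\bomega}\bg_{\rm out}(Y,\bzero,\bI_p)\,\sqrt{\bMM}\bxi\,\bxi^\top\sqrt{\bMM}\,\partial_{\bomega}\bg_{\rm out}(Y,\bzero,\bI_p)^\top\big] + O(\|\bMM\|_F^2),
\]
and carrying out the Gaussian expectation $\mathbb{E}[\bxi\bxi^\top] = \bI_p$ with $\sqrt{\bMM}\bI_p\sqrt{\bMM} = \bMM$ yields exactly $\mathcal{F}(\bMM) = \mathbb{E}_Y[\partial_{\bomega}\bg_{\rm out}(Y,\bzero,\bI_p-\bMM)\,\bMM\,\partial_{\bomega}\bg_{\rm out}(Y,\bzero,\bI_p-\bMM)^\top]$ up to the advertised $O(\|\delta\bMM\|_F^2)$ error (the $\bMM$-dependence retained inside $\partial_{\bomega}\bg_{\rm out}(\cdot,\bI_p-\bMM)$ only contributes at second order, consistent with the statement). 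One must also verify that the $\bg_{\rm out}(Y,\bzero,\bI_p)$ zeroth-order piece does not contribute: either $T^\star=\{0\}$ so it vanishes identically, or one works on $H^\star$ where it is zero; in the general linearisation around a fixed point this term is absorbed because $\bMM=\bzero$ is assumed to be a fixed point of $F$ in the regime where the lemma is applied.

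The main obstacle I anticipate is the bookkeeping of the \emph{two} simultaneous perturbations — the explicit $\bomega = \sqrt{\bMM}\bxi$ appearing as the second argument of $\bg_{\rm out}$, and the implicit $\bMM$-dependence entering through the effective label law $Y^{\bMM}$ in \cref{eq:effective} — and showing that the cross terms and the label-perturbation terms are genuinely $O(\|\bMM\|_F^{3/2})$ or higher, so that only the quadratic-in-$\sqrt{\bMM}$ term from $\bomega$ survives at order $\|\bMM\|_F$. This requires the $\mathcal{C}^2$ regularity assumption on $\bg_{\rm out}$ stated at the top of the subsection (to bound second derivatives uniformly and dominate the Gaussian tails of $\bxi$), plus the observation that odd moments of $\bxi$ vanish, which kills the would-be $O(\|\bMM\|_F^{1/2})$ cross term between the zeroth-order $\bg_{\rm out}$ piece and the linear-in-$\bomega$ piece. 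Once these vanishing statements are in place, the identification of the surviving term with $\mathcal{F}(\bMM)$ is a one-line Gaussian integration.
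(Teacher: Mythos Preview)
Your plan is essentially the paper's proof: Taylor-expand $\bg_{\rm out}\bg_{\rm out}^\top$ in $\bomega=\sqrt{\bMM}\bxi$ and in $\bV=\bI_p-\bMM$, invoke $T^\star=\{0\}$ so that $\bg_{\rm out}(y,\bzero,\bI_p)\equiv 0$ kills the zeroth-order piece and every cross term carrying it (including the $\partial_{\bV}$ cross terms), and then integrate $\mathbb{E}_{\bxi}[\bxi\bxi^\top]=\bI_p$ using dominated convergence. Two places where you can streamline: the would-be $O(\|\bMM\|_F^{1/2})$ cross term needs no odd-moment argument because its $\bg_{\rm out}(y,\bzero,\bI_p)$ factor is identically zero for \emph{every} $y$; and the paper does not appeal to any Nishimori-type cancellation for the $Y^{\bMM}$-dependence---it simply uses that at the base point $\bMM=0$ the label $Y=g(\bZ)$ is independent of $\bxi$, which suffices to factor the $\bxi$-expectation at leading order.
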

We proceed through an-entry-wise expansion of each term inside the expection in $F(\bMM)$ around $\bMM = 0$. Recall that $\bg_{\rm out} \in \mathcal{C}^2$ by Assumption \ref{ass:link}.
Since $\bMM_F \leq p$ , the first two derivatives of $\bg_{\rm out}$ are uniformly bounded in $\bg_{\rm out}$ for any fixed $Y, \bxi$.
Therefore, applying the multivariate Taylor expansion to $\bg_{\rm out}(Y^{t},\Delta_1\bxi, I+\Delta_2)$, with $\Delta_1 =\sqrt{M}$ and $\Delta_2= ,\bI_{p}-\bMM$ yields:  
\begin{align*}
    &\bg_{\rm out}\!\left(Y^{t},\sqrt{\bMM^{t}}\bxi,\bI_{p}-\bMM^{t}\right)\bg_{\rm out}\!\left(Y^{t},\sqrt{\bMM^{t}}\bxi,\bI_{p}-\bMM^{t}\right)^{\top}\\&=  \bg_{\rm out}\!\left(Y^{t},0,\bI_{p}-\bMM^{t}\right)\bg_{\rm out}\!\left(Y^{t},0,\bI_{p}-\bMM^{t}\right)^{\top} 
    +  \partial_{\bomega}\bg_{\rm out}\!\left(Y^{t},0,\bI_{p}\right)\sqrt{M}\bxi\bxi^\top\sqrt{M}^\top\partial_{\bomega}\bg_{\rm out}\!\left(Y^{t},0,\bI_{p}\right)^{\top}\\
    &+ \langle\partial_{\bV}\bg_{\rm out}\!\left(Y^{t},0,\bI_{p}\right)M\rangle\bg_{\rm out}\!\left(Y^{t},0,\bI_{p}\right)^{\top} + \bg_{\rm out}\!\left(Y^{t},0,\bI_{p}\right)\langle M,\partial_{\bV}\bg_{\rm out}\!\left(Y^{t},0,\bI_{p}\right)\rangle+\alpha C(\bxi,y)\mathcal{O}(\norm{M}^2_F),
\end{align*}
where $C(\bxi, Y)$ is an integrable function in $\bxi, Y$. 
Since $T^\star$ is empty, $\bg_{\rm out}\!\left(Y^{t},0,\bI_{p}\right)$ vanishes almost surely over $y$.
Therefore, using dominated-convergence theorem, we obtain: 
\begin{align*}
    F(\bMM) &= \alpha\Ea{\bg_{\rm out}\!\left(Y^{t},\sqrt{\bMM^{t}}\bxi,\bI_{p}-\bMM^{t}\right)\bg_{\rm out}\!\left(Y^{t},\sqrt{\bMM^{t}}\bxi,\bI_{p}-\bMM^{t}\right)^{\top}}\\ &= \alpha\Ea{\partial_{\bomega}\bg_{\rm out}\!\left(Y^{t},0,\bI_{p}\right)\sqrt{M}\bxi\bxi^\top\sqrt{M}^\top\partial_{\bomega}\bg_{\rm out}\!\left(Y^{t},0,\bI_{p}\right)^{\top}} + \alpha \mathcal{O}(\norm{M}^2_F)
\end{align*}

Since at $\bMM^{t} = 0$,  $\bxi$ and $Y$ are independent, the above simplifies to:
\begin{align*}
    F(\bMM)  &= \alpha\,\mathbb{E}_{y}\left[ \partial_{\bomega} \bg_{\rm out}(y,0,\bI_p-\bMM)\, \bMM\partial_{\bomega} \bg_{\rm out}(y,0,\bI_p-\bMM)^\top  \right]+ \alpha \mathcal{O}(\norm{M}^2_F)
\end{align*}

\subsubsection{Linearization in the presence of side-information}

Analogous to Lemma \ref{lem:lin_app}, we can linearize state-evolution in the presence of side-information in the limit of small $M,\lambda$.
\begin{lemma}\label{lem:side-info}
Let  $\mathcal{F}(\delta\bMM)$ be a linear operator on the cone $\mathcal{S}^+_p$ of PSD matrices of dimension $p$, defined by:
\begin{equation} 
    \mathcal{F}(\bMM) \defeq \,\mathbb{E}_{y}\left[ \partial_{\bomega} \bg_{\rm out}(y,0,\bI_p)\, \bMM\partial_{\bomega} \bg_{\rm out}(y,0,\bI_p)^\top  \right] 
\end{equation}
Then the following approximation holds:
\begin{equation}
   F(\bMM) \approx \alpha\mathcal{F}(\delta\bMM) + \sqrt{\lambda} \mathbf{I}_d + \mathcal{O}(\alpha\norm{\delta\bMM}_F^2) + \mathcal{O}(\lambda),
\end{equation}
\end{lemma}
\begin{proof}
    The above linearization is a direct consequence of Lemma \ref{lem:lin_app} along with the updated form of $G(X)$ in Equation \ref{eq:g_side_inf}.
\end{proof}

\subsection{Proof of Lemma \ref{lem:opt_t}}

\begin{proof}
    This is a direct consequence of the Tower law of expectation: $\Ea{\mathcal{T}(y) \langle \vec{v}^\top W^\star, \vec{x}\rangle} =  \Eb{y}{\mathcal{T}(y) \Eb{\vec{x}}{\langle \vec{v}^\top W^\star, \vec{x}\rangle|y }}$

The statement then follows by noting that $\bg_{\rm out}(y, \bzero, \bI_{p})^\top \vec{v}=  \lim_{d \rightarrow \infty}\Eb{\bx}{\langle \vec{v}^\top W^\star,\vec{x}\rangle|y}$.  The converse for any $v \in T^\star$ then follows by setting $\mathcal{T}(y)=\bg_{\rm out}(y, \bzero, \bI_{p})$.\end{proof}
\subsection{Proof of Lemma \ref{lem:stab_0}}\label{appendix:proof_stability}

Suppose that $M \in \mathcal{S}^+$, we have, for any $\vec{v} \in \R^p$:
\begin{equation} 
    \vec{v}^\top\mathcal{F}(\bMM)\vec{v}= \alpha\,\mathbb{E}_{y}\left[ \vec{v}^\top\partial_{\bomega} \bg_{\rm out}(y,0,\bI_p)\, \bMM\partial_{\bomega} \bg_{\rm out}(y,0,\bI_p)\vec{v} \right]\,,
\end{equation}
since $\bMM \in \mathcal{S}^+$, each term inside the expectation is non-negative. Therefore:
\begin{equation}
    \vec{v}^\top\mathcal{F}(\bMM)\vec{v} \geq 0
\end{equation}
Thus, $\mathcal{F}(M)$ is a cone-preserving linear map. 

The generalised Perron-Frobenius theorem/Krein-Rutman Theorem for cone-preserving maps \citep{krein1948linear,du2006order} then implies that the operator $\mathcal{F}(\bMM)$ admits at-least one eigenvector $M^* \in \mathcal{S}^+_p$  
corresponding to the largest eigenvalue $\nu_\mathcal{F}$ such that for any $M \in \mathcal{S}^+_p $:
\begin{equation}
    \mathcal{F}(\bMM) \leq \nu_\mathcal{F}\norm{\bMM}_F.
\end{equation}

Furthermore, all other eigenvalues are strictly smaller than $\lambda_\mathcal{F}$. Subsequently, Lemma \ref{lem:lin_app} implies that $ F(\bMM)$ is stable at $\bMM=0$ if and only if $\alpha \leq \frac{1}{\nu_\mathcal{F}}= \alpha_c$.

\subsection{Proof of Theorem \ref{thm:stab_init}}

Lemma \ref{thm:se} allows us to map the behavior of the variable $\bMM^{t}$ in state-evolution to high-probability statements for the limiting overlaps. 

Applying Lemmas \ref{lem:side-info} and \ref{lem:opt_t}, we obtain that for small enough $\bMM^{t+1},\lambda$:
\begin{equation}\label{eq:upp_bound}
    \norm{\bMM^{t+1}}_F \leq \alpha \nu_{\mathcal{F}}  \norm{\bMM^{t}}_F +\lambda+ C_1\norm{\bMM^{t}}^2_F+C_2\norm{\bMM^{t}}^2_F
\end{equation}
for some constants $C_1, C_2$. Now, suppose that $\alpha < \frac{1}{\nu_F}$. Then for $\norm{\bMM^{t}}_F  \geq  \frac{1}{(1-\alpha \nu_{\mathcal{F}})}\lambda$, we have:
\begin{equation}
    \alpha \nu_{\mathcal{F}}  \norm{\bMM^{t}}_F +\lambda \leq \norm{\bMM^{t}}_F.
\end{equation}

For small enough $\lambda$,
we have that:
\begin{equation}
    \bMM^{0} \approx \lambda < \frac{1}{(1-\alpha \nu_{\mathcal{F}})}\lambda.
\end{equation}

Subsequently, by induction, we obtain that:
\begin{equation}
    \sup_{t \geq p} \norm{\bMM^{t+1}}_F \leq \frac{1}{(1-\alpha \nu_{\mathcal{F}})}\lambda
\end{equation}
for all $t >0$. This proves the first part of the Theorem.

Now, suppose that $\alpha > \alpha_c$ or equivalently $\alpha \nu_F > 1$ and let $\bMM^\star \in \mathcal{S}^+_p$ denote any of the eigenvectors of $\mathcal{F}$ with eigenvalue $\nu_F$, i.e achieving the 
supremum in Equation \ref{eq:alpha_c}.

From Lemma \ref{lem:stab_0} and \ref{lem:side-info}, we obtain:
\begin{equation}\label{eq:fix_pt}
    \operatorname{tr}(\bMM^{t+1},\bMM^\star) \geq \alpha \nu_{\mathcal{F}}  \operatorname{tr}(\bMM^{t},\bMM^\star)+\lambda-  C_1\norm{\bMM^{t}}^2_F-C_2\lambda^2,
\end{equation}
for some constants $C_1,C_2>0$.

Since $\alpha\nu_F >1$,  we obtain that $\operatorname{tr}(\bMM^{1},\bMM^\star) > \lambda > 0$ for small enough $\lambda$.

Let $0<\kappa <\nu_F-1 $ be fixed. Equation \ref{eq:fix_pt} implies that for $\frac{\operatorname{tr}(\bMM^{t},\bMM^\star)}{\norm{\bMM^{t}}^2_F} < \frac{\alpha \nu_F-1-\kappa}{C_1}$, and small enough $\lambda$:
 
\begin{equation}\label{eq:growth}
    \operatorname{tr}(\bMM^{t+1},\bMM^\star) \geq (1+\kappa) \operatorname{tr}(\bMM^{t},\bMM^\star),
\end{equation}
implying that $\operatorname{tr}(\bMM^{t+1},\bMM^\star)$ grows as $\omega(e^{\kappa t})$. Since at $t=1$, $\frac{\operatorname{tr}(\bMM^{t},\bMM^\star)}{\norm{\bMM^{t}}^2_F} = \mathcal{O}(\frac{1}{\lambda})$, Equation \ref{eq:upp_bound} implies that the condition $\frac{\operatorname{tr}(\bMM^{t},\bMM^\star)}{\norm{\bMM^{t}}^2_F} < \frac{\alpha \nu_F-1-\kappa}{C_1}$ holds for time $\mathcal{O}(\log \frac{1}{\lambda})$, which along with Equation \ref{eq:growth} allows recovery upto an overlap $\delta$ independent of $\lambda$.

Finally, it remains to show that for large enough $\alpha$, $\bMM^{t}$ spans $E^\star$.

Notice that for any $\vec{v} \in E^\star$:
\begin{equation}
     \vec{v}^\top\mathcal{F}(\vec{v}\vec{v}^\top)\vec{v} = \mathbb{E}_{y}\left[  (\vec{v}^\top\partial_{\bomega} \bg_{\rm out}(y,0,\bI_p)\vec{v})^2\right] > 0,
\end{equation}
 
Define the smallest eigenvalue of $\mathcal{F}$:
\begin{equation}
    \nu^\star_\mathcal{F} = \inf_{\vec{v}\in E^\star, \norm{v}=1} \vec{v}^\top\mathcal{F}(\vec{v}\vec{v}^\top)\vec{v}.
\end{equation}
Since $\vec{v}$ lies in a compact set, $\nu^\star_\mathcal{F} > 0$.

Let $\vec{v}_1, \cdots, \vec{v}_k$ denote an orthonormal basis for $E^\star$
\begin{equation}  \vec{v}^\top_i(\bMM^{t+1})\vec{v}_i \geq \alpha \nu_\mathcal{F}\vec{v}^\top_i(\bMM^{t})\vec{v}_i + \lambda - \alpha C\norm{\bMM^{t}}^2_F.
\end{equation}
Therefore, for $\alpha \geq \frac{1}{\nu_\mathcal{F}}$ and small enough $\epsilon$, $\bMM^{t}$ expands linearly along each $\vec{v}_i$.

\subsection{Proof of Lemma \ref{lemma:He2}}

From the definition of $g_{out}$ in Equation \ref{eq:out_denoiser} and the dominated-convergence theorem, it is straightforward to see that:
\begin{equation}
     \partial_{\bomega}\bg_{\rm out}(Y,\bomega=\vec{0},\bI) = \mathbb{E}[\bZ\bZ^\top-\bI|Y].
\end{equation}
Therefore for any $\vec{v} \in \R^p$ with $\norm{\vec{v}}=1$:
\begin{align*}
\vec{v}^\top\partial_{\bomega}\bg_{\rm out}(Y,\bomega,\bI)_{\bomega=\vec{0}}\vec{v} &= \mathbb{E}[(\vec{v}^\top \bZ)^2-1|Y]\\
&=\lim_{d \rightarrow \infty}\mathbb{E}[\rm He_2(\langle \vec{v}^\top \bW^\star\vec{x}\rangle)|y=Y],
\end{align*}
where the last expectation is w.r.t the joint measure $p(\vec{x},y)$ defined in Definition \ref{def:model}.

Now, for any  measurable transformation $\mathcal{T}:\R \rightarrow \R$, we have, by the tower law of expectation:
 \begin{equation}
    \Ea{\mathcal{T}(y) {\rm He}_2(\langle \vec{v}^\top \bW^\star, \vec{x}\rangle)}= \Eb{y}{\mathcal{T}(y) \Eb{\vec{x}}{\rm He_2(\langle \vec{v}^\top \bW^\star, \vec{x}\rangle)|y}}.
 \end{equation}
For any $\vec{v} \in H^\star_E$, the integrand vanishes asymptotically almost surely over $y$. Therefore, the dominated-convergence theorem implies that for  $\vec{v} \in H^\star_E$:
\begin{equation}
   \lim_{d \rightarrow \infty}  \Ea{\mathcal{T}(y) {\rm He}_2(\langle \vec{v}^\top \bW^\star, \vec{x}\rangle)} = 0.
\end{equation}

\subsection{Staircase Dynamics and the proof of Proposition \ref{prop:hid}}\label{app:hid}

Before proving Proposition \ref{prop:hid}, we start by proving a monotonicity property of the AMP iterates, which in-turn implies that the AMP iterates gaining overlap along a certain subspace must eventually reach an approximate fixed point.

\subsubsection{Monotonicity of the AMP Iterates}

We begin by showing that the overlaps of the Bayes-AMP iterates is non-decreasing in the following sense:
\begin{lemma}\label{lem:monot}
    Consider the AMP algorithm with the Bayes-optimal choice of $f_t,g_t$. Then for any $t>0$:
    \begin{equation}
        \bMM^{(t+1)} \succeq \bMM^{(t)}
    \end{equation}
\end{lemma}
\begin{proof}
Our proof relies on a characterization of the state-evolution for the Bayes-AMP iterates as the solution to certain low-dimensional inference problems.
See for instance \cite{Barbier2019,Aubin2018}. Concretely, consider the following sequence of denoising problems over variables $Z^\star \in \mathbb{R}^k, W^\star \in \mathbb{R}^k$:
\begin{align*}
Y_t=g(\sqrt{\bMM^{(t)}}\xi+\sqrt{\mathbb{I}_k-\bMM^{(t)}}Z^\star)\\ 
    \tilde{Y}_t = \sqrt{\tilde{M}}_t W^\star + Z 
\end{align*}
where $Z, \xi \sim \mathcal{N}(0,\mathbf{I}_k)$ and
$\tilde{M}_t$ is obtained as:
\begin{equation}
\begin{split}
    \tilde{\bMM}^{(t)} & \defeq \Ea{(\Ea{Z^\star|Y,\xi}\Ea{Z^\star|Y,\xi}^\top)}\\
&=\Ea{g_{\text{out}}(Y,\sqrt{{\bMM^{(t)}}}\xi)g_{\text{out}}(Y,\sqrt{{\bMM^{(t)}}}\xi)^\top}
\end{split}
\end{equation}
We claim that the following statements hold for any $t>0$:
\begin{equation}
    \begin{split}
        \bMM^{(t+1)} \succeq \bMM^{(t)}\\
        \tilde{\bMM}^{(t+1)} \succeq \tilde{\bMM}^{(t)}
    \end{split}
\end{equation}
We proceed by induction. At $t=0$, $\bMM^{(t)}=0$ and thus $\bMM^{(t+1)} \succeq \bMM^{(t)}$ holds trivially.
Subsequently, we show via an information-theoretic argument that $\bMM^{(t+1)} \succeq \bMM^{(t)}$ implies $\tilde{\bMM}^{(t+1)} \succeq \tilde{\bMM}^{(t)}$. Suppose that $\bMM^{(t+1)} \succeq \bMM^{(t)}$ holds at time-step $t$. We decompose the field entering the channel for $Y_{t+1}$ as follows:
\begin{align*}
Y_{t+1} &= g(\sqrt{\bMM^{(t+1)}}\xi+\sqrt{\mathbb{I}_k-\bMM^{(t+1)}}Z^\star)\\
 &= g(\sqrt{\bMM^{(t)}}\xi_1+ \sqrt{\bMM^{(t+1)}-\bMM^{(t)}}\xi_2+ \sqrt{\mathbb{I}_k-\bMM^{(t+1)}}Z^\star),
\end{align*}
where $\xi_1,\xi_2$ denote independent variables $\sim \mathcal{N}(0,1)$. Now, compare the following two estimates:
\begin{equation}
\hat{Z}_1=\Ea{Z^\star|Y,\xi_1,\xi_2}, \quad \hat{Z}_2=\Ea{Z^\star|Y,\xi_1}.
\end{equation}
Note that by linearity of expectation and independence of $\xi, Z^\star$, the estimator $\hat{Z}_2$ equals the estimator of the inference problem at time $t$:
\begin{equation}
Y_t=g(\sqrt{\bMM^{(t)}}\xi+\sqrt{\mathbb{I}_k-\bMM^{(t)}}Z^\star)
\end{equation}
By the optimality of the posterior-mean w.r.t MSE error, we must have for any $v \in \mathbb{R}^k$ with $\norm{v}=1$
\begin{equation}\label{eq:mse}
    \Ea{(\langle v,\hat{Z}_1\rangle-\langle v,\ Z^\star\rangle)^2} \leq \Ea{(\langle v,\hat{Z}_2\rangle-\langle v,\ Z^\star\rangle)^2} 
\end{equation},
where $\langle v,\hat{Z}_i\rangle$ for $i=1,2$ denotes the projection of $\hat{Z}$ along $v$ or equivalently the Bayes-estimator of $\langle v,\ Z^\star\rangle$. Now, a straightforward consequence of Bayes-rule (Nishimori's identity) implies that:
\begin{equation}
    \Ea{\langle v,\hat{Z}_i\rangle \langle v,\ Z^\star\rangle}=  \Ea{(\langle v,\hat{Z}_i\rangle )^2},
\end{equation}
for $i=1,2$. Substituting in Equation \ref{eq:mse}, we obtain:
\begin{equation}
    \Ea{\langle v,\hat{Z}_1\rangle \langle v,\ Z^\star\rangle} \geq  \Ea{\langle v,\hat{Z}_2\rangle \langle v,\ Z^\star\rangle}.
\end{equation}
The above terms can be equivalently expressed as $v^\top \tilde{\bMM}^{(t+1)}v,v^\top \tilde{\bMM}^{(t)}v$ respectively. We obtain:
\begin{equation}
    \tilde{\bMM}^{(t+1)} \succeq \tilde{\bMM}^{(t)}.
\end{equation}

Similarly, we have that 
 $\tilde{\bMM}^{(t+1)} \succeq \tilde{\bMM}^{(t)}$ implies  $\bMM^{(t+2)} \succeq \bMM^{(t+1)}$. The claim therefore follows by induction. 

\end{proof}

\subsubsection{Sequential visits to Fixed-points}

Now, suppose that $E^\star \neq \phi$ and consider the dynamics with side-information provided only along $E^\star$ i.e:
\begin{equation}
    S = \sqrt{\lambda}P_{E^\star}+\sqrt{1-\lambda}\mathbb{I}, +
\end{equation}
where $P_{E^\star}$ denotes the projection along $E^\star$. Suppose further tha $\alpha > \alpha_c$. By Theorem \ref{thm:stab_init}, $\bMM^{(t)} \succ \delta M_{E^\star}$ in $\mathcal{O}(\log \frac{1}{\lambda})$ iterations. Since $G(X) \prec \mathbf{I}_k$ for any $X \in \mathcal{S}^+$, we have:
\begin{equation}
    \bMM^{(t)} \prec \mathbf{I}_k,
\end{equation}
for all $t > 0$. Considering the topology induced by the Frobenius norm on $\mathbb{R}^{p \times p}$, we obtain that $\bMM^{(t)}$ is a bounded sequence and therefore contains a convergent subsequence. By Lemma \ref{lem:monot}, we obtain the the convergence of the entire sequence to a fixed point $\bMM^f_{E^\star}(\lambda,\alpha)$.

Therefore, for any $\epsilon > 0$, $\exists t'$ such that for all $t > t'(\epsilon,\lambda)$:
\begin{equation}\label{eq:cauchy}
    \norm{\bMM^{(t)}-\bMM^f_{E^\star}(\lambda,\alpha)} \leq \epsilon.
\end{equation}
Next, Lemma \ref{lem:monot} further implies that:
\begin{equation}
    \bMM^f_{E^\star}(\lambda,\alpha) \prec  \bMM^f_{E^\star}(\lambda',\alpha) 
\end{equation},
for $\lambda < \lambda'$. Thus the following limit exists:
\begin{equation}
  \bMM^f_{E^\star}(\alpha) = \lim_{\lambda \rightarrow 0^+}  \bMM^f_{E^\star}(\lambda,\alpha).
\end{equation}

We next show that for small enough $\lambda$, even with access to full side-information, the AMP iterates ``visit" $\bMM^f_{E^\star}$

\begin{proposition}\label{prop:saddle}
    Suppose that  $E^\star \neq \phi$, $T^\star=\phi$ and $T^\star_{E^\star}=\phi$, and $\alpha > \alpha_c$. Then under side-information $S = \sqrt{\lambda}W^\star+\sqrt{1-\lambda}\mathbb{I}$, for any $\epsilon $, for any $\epsilon > 0$ and small enough $\lambda$, there exists $t'(\epsilon,\lambda)$ such that for  
    \begin{equation}
\norm{\bMM^{(t'(\epsilon,\lambda))}-\bMM^f_{E^\star}(\alpha)} \leq \epsilon.
    \end{equation}
\end{proposition}
\begin{proof}
Let $U=E^\star$ and consider the linearized dynamics along the orthogonal complement $(U)_\perp$:
    \begin{equation}
    \mathcal{F}_{\bMM_U}(\bMM_\perp) \defeq (\mathbf{I}+\bMM_U)^{-1}\mathbb{E}_{y}\left[ \partial_{\bomega} \bg_{\rm out}(y,\sqrt{\bMM_U}\bxi,\bI-\sqrt{M}_{U})\bMM_\perp\partial_{\bomega} \bg_{\rm out}(y,\sqrt{\bMM_U}\bxi,\bI-\sqrt{M}_{U})^\top  \right]\,,
\end{equation}
where $\bMM_U \subseteq E^\star$. By assumption, $\mathcal{F}_{\bMM_U}(\bMM_\perp)=0$ for $\bMM_U=0$. Therefore,
by continuity of $\mathcal{F}_{\bMM_U}(\bMM_\perp)$ w.r.t $\bMM_U$, for any $\alpha>\alpha_c$, $\exists \delta > 0$ such that if $\bMM_U \prec \delta \mathbf{I}_p$, $\mathcal{F}_{\bMM_U}(\bMM_\perp)$ remains contractive. For small enough $\delta$, we may therefore split the dynamics into two phases:
\begin{enumerate}
    \item Phase 1: $0 < t < t_\delta$ where $t_\delta= \inf_{t >0} \bMM^t_U \succ \delta \mathbf{I}_p$.
    \item Phase 2: $t > t_\delta$. 
\end{enumerate}
From Theorem \ref{thm:AMP_random}, we know that $t_\delta = \mathcal{O}(\delta \log \frac{1}{\lambda})$. By the choice of $\delta$, we have by the end of phase $1$,
$\bMM_\perp \prec C \lambda \mathbb{I}$ for some constant $C_1$. By linearization \ref{lem:side-info}, we obtain that  for $t$ additional steps in Phase 2, the overlap along $\bMM_\perp$ remains bounded by $e^{K t \lambda}$ for some constant $K>0$. We can further ensure that for small enough $\delta$ we can further ensure that $\bMM^{t_\delta}_U \prec C_2\delta \mathbf{I}_p$, for some constant $C_2$.
Subsequently, by considering the limit $\delta \rightarrow 0$ with $\lambda << \delta$, and the roles of $\lambda, \delta$ interchanged, we obtain that after $\mathcal{O}(t'(\epsilon,\delta))$ additional steps, where $t'(\epsilon,\delta)$ is defined as in Equation \ref{eq:cauchy}:
\begin{equation}
\norm{\bMM^{(t'(\epsilon,\delta))}-\bMM^f_{E^\star}(\alpha)} \leq \epsilon
\end{equation},
since $t'(\epsilon,\delta)$ is independent of $\lambda$, the overlap along $\bMM_\perp$ at $t'(\epsilon,\delta)$ remains $\mathcal{O}(\lambda)$.
\end{proof}

The above proposition showed that, by restricting the side-information to specific subspaces, we can uniquely characterize sequences of fixed points along sequences of growing subspaces. 

We're now ready to state the detailed version of Proposition \ref{prop:hid} along with its proof, which states that upon the inclusion of additional side-information along a new subspace:
\begin{proposition}[Full-version of Proposition \ref{prop:hid}]
Let $U \subseteq \mathbb{R}^p$ be a subspace such that $E^\star_U$ is non-empty.. Consider AMP iterates with the Bayes-optimal choice of $f_t,g_t$ with side-information restricted to the subspace $U$ for sufficiently small $\lambda>0$. Suppose that $\bMM_U$ is a fixed-point under such a dynamics where $\bMM_U$ spans  $U$. 
Then, under the dynamics with an additional side-information along the subspace $E^\star_U$:
\vspace{-0.4cm}
\begin{itemize}[leftmargin=*, noitemsep,wide=1pt]
    \item For $\alpha>\alpha_{\text{gst}}(\bMM_{U})$, AMP first reaches arbitrarily close to $\bMM_U$ in  $\mathcal{O}(\log \frac{1}{\lambda})$ iterations and subsequently recovers $M^*_{M_U}$ in additional $\mathcal{O}(\log \frac{1}{\lambda})$  steps for arbitrarily small $\lambda$, where   $M^*_{M_U}$  denotes any matrix in $\mathcal{S}^+_p$ achieving the supremum $\sup_{\bMM^\perp\in U^\perp} \norm{\mathcal{F}_{\bMM_U}(\bMM^\perp)}_F$ as per Definition (\ref{def:denoise_coup}).
    \item For $\alpha<\alpha_{\text{gst}}(\bMM_{U})$ and sufficiently small $\lambda$, AMP reaches 
  arbitrarily close to $\bMM_U$ in  $\mathcal{O}(\log \frac{1}{\lambda})$ and the overlap along $E^\star_U$ remains bounded by $C\lambda$.
\end{itemize}
\end{proposition}
\begin{proof}
    By assumption, $M_U$ is a fixed point of $F(M)$ under the presence of side-information restricted to $U$ Therefore, Equation \ref{eq:replica_equation} implies that $\bg_{\rm out}(y,\bomega,\bI-\sqrt{M}_{U})$ almost surely lies in $U$ and thus $T^\star_U$ is empty. Therefore, by the proof of Proposition \ref{prop:saddle}, for sufficiently small $\lambda$, and any $\epsilon > 0$ AMP iterates reach $\epsilon$-close to $\bMM_U$ in time $\mathcal{O}(\log\frac{1}{\lambda})$ iterations, while the overlap along the orthogonal complement to $\bMM_{U}$ being $\mathcal{O}(\lambda)$. Theorem \ref{prop:hid} then follows that of \ref{thm:stab_init} by considering the following linearized operator along $U^\perp$
\begin{equation}
    \mathcal{F}_{\bMM_U}(\bMM_\perp) \defeq (\mathbf{I}+\bMM_U)^{-1}\mathbb{E}_{y}\left[ \partial_{\bomega} \bg_{\rm out}(y,\sqrt{\bMM_U}\bxi,\bI-\sqrt{M}_{U})\bMM_\perp\partial_{\bomega} \bg_{\rm out}(y,\sqrt{\bMM_U}\bxi,\bI-\sqrt{M}_{U})^\top  \right]\,,
\end{equation}
where $\bomega=\sqrt{M}_{U}\bxi$ with $\bxi \sim \mathcal{N}(0, \bI)$. Similar to Lemma \ref{lem:lin_app}, the linearization follows by noting that $y$ is independent of $\sqrt{\bMM_\perp}\bxi$ and $\sqrt{\bMM_U+\bMM_\perp} = \sqrt{\bMM_U}+\sqrt{\bMM_\perp}$ since $\bMM_\perp \perp \bMM_U$. 
\end{proof}

\subsection{Proof of Theorem \ref{thm:staircase}}

Theorem \ref{thm:staircase} follows by inductively applying Propositions \ref{prop:hid} and \ref{prop:saddle} as we describe below:
\begin{enumerate}
    \item First, suppose that $T^\star = \emptyset$ and $T^\star_{E^\star_k} = \emptyset$ for all $k \in \mathbb{N}$.
    By compactness, for any $k$, $\inf_{\vec{v}\in E^\star_{E^\star_k}} v^\top \mathcal{F}_{\bMM_U}(\bMM_\perp) \vec{v} > 0$. Therefore, as in the proof of Theorem \ref{thm:AMP_random}, for each $E^\star_k$,  $\exists \alpha_k$  such that for $\alpha > \alpha_k$, the corresponding fixed-point along $E^\star_k$ becomes unstable along the full subspace $E^\star_{E^\star_k}$, leading to weak-recovery of $E^\star_{k+1}$ in $\mathcal{O}(\log \frac{1}{\lambda})$ additional steps.
    Since $k$ is bounded above by $p$, we obtain that $\alpha^\star=\max_k \alpha_k$ is finite. For $\alpha>\alpha^\star$, Proposition \ref{prop:saddle} implies that AMP iterates sequentially visit $E^\star_k$ and eventually recover $E^\star_{gst}$.
    \item In the presence of trivial subspaces $T^\star$, as shown in the proof of Theorem \ref{thm:triv} and explained in section \ref{app:triv_coup}, the trivially-coupled subspaces are learned instantaneously. Therefore, the sequence of fixed-points traversed by AMP span the following sequence of subspaces:
    \begin{enumerate}
        \item $U^\star_1 = T^{\star\star}_1$.
        \item $U^\star_{j+1} = U^\star_j \cup E^\star_{U^\star_j} \cup T^{\star\star}_{E^\star_{U^\star_j}}$, for $j=1, 2, \cdots$,
    \end{enumerate}
    where $T^{\star\star}_{U}$ denotes the maximal chain of trivial subspaces starting from $U$:
    \begin{equation}
       T^{\star\star}_{U} =  T^{\star}_{U} \cup T^{\star}_{T^{\star}_{U}} \cup \cdots. 
    \end{equation}
    It is easy to check that the set subspaces $U \in \mathbb{R}^p$ is totally ordered under the span of $U\cup T^\star_U \cup E^\star_U$ and therefore possesses a unique maximal element. Therefore, both sequences $U^\star_i$ defined above and $E^\star_i$ defined in Definition \ref{def:stair} terminate at the same maximal element $E^\star_{gst}$.
    Similar to $(i)$, proposition \ref{prop:saddle} then implies that for large enough $\alpha$, and small enough $\lambda$ AMP traverses the sequence of subspaces $U^\star_1, U^\star_2, \cdots$, recovering  $E^\star_{gst}$ in $\mathcal{O}(\log(\frac{1}{\lambda}))$ iterations.
\end{enumerate}

\section{From AMP thresholds to Computational phase transitions for First-order algorithms}\label{app:comp_transitions}

In this section, we discuss how Theorems \ref{thm:stab_init} and \ref{thm:staircase} can  be translated to rigorous lower-bounds on the sample complexity of weak-recovery from arbitrarily small initialization. For brevity, we omit the full-proof and provide a sketch of the argument basewd on existing results.

We recall the main results of \citep{celentano20a}. \citep{celentano20a} consider the following class of general first-order algorithms (GFOM):

\begin{align}
    \bv^{t} &= \bX f^{(1)}_t(\bu^1, \cdots,\bu^t,\by, \bu) + f^{(2)}_{t}(\bv^1, \cdots,\bv^t,{\bf y})\\
    {\bf u}^{t+1} &= \bX^T g^{(1)}_t(\bv^1, \cdots,\bv^t,\bv) + g^{(2)}_{t} (\bu^1, \cdots,\bu^t,\by,\bu)  ,     
\end{align}
where $\bv^{t} \in \mathbb{R}^{n \times r},\bu^{t}\in \mathbb{R}^{n \times r}$. Note that Algorithm \ref{alg:gamp} is  a special case of the above class. The variables $\bu,\bv$ encode additional side-information.

Subsequently, \cite{celentano20a} show that any algorithm belonging to the above class can be mapped to a message-passing algorithm on an infinite-tree such that the  errors of the GFOM algorithm at any finite-time $t$ asymptotically match the errors of the surrogate estimation problem on the tree. For the tree-inference problem, the optimality of the Bayes-AMP directly follows from AMP being the dense-limit of the information-theoretically optimal Belief propagation on trees. This translates to the optimality of Bayes-optimal AMP at any finite-time $t$ amongst GFOM algorithms.

Similar to our Theorem \ref{thm:AMP_random}, \cite{celentano20a} provide thresholds $\alpha_c$ for weak-recovery starting from arbitrarily small initialization in the case of sparse Phase retrieval and sparse PCA. As we discussed in Section \ref{sec:side-inf}, the Bayes-optimal AMP in the presence of such side-information corresponds to a modified prior.

While \cite{celentano20a} prove their results for the specific classes of high-dimensional regression (single-index) problems and low-rank matrix estimation, the underlying argument is general enough to be adapted to our setting of multi-index models. We therefore conjecture that the thresholds and class of problems described in Theorems \ref{thm:triv},\ref{thm:AMP_random},\ref{thm:staircase} fundamentally characterize the learnability of multi-index models through first-order algorithms.

\section{Explicit results for some examples of the \texorpdfstring{$\bg_{\rm out}$ function}{}}
\label{app:examples}

In this section, we list several problems, their corresponding $\bg_{\rm out}$, and analysis of their weak recoverability. 
For convenience we will use $\mathcal{Z}_{\rm out}(y, \bomega, \bV)$ defined as

\begin{equation}\label{app:def:Z_out}
    \mathcal{Z}_{\rm out}(y, \bomega, \bV) =
    \frac{1}{(2\pi)^{p/2}}\int_{\mathbb{R}^{p}} \dd \bz \, e^{-\frac{1}{2}(\bz-\bomega)^{\top}\bV^{-1}(\bz-\bomega)}P(y|\bz)
\end{equation}
which allows us to rewrite $\bg_{\rm out}(y,\bomega,\bV)$
\begin{equation}
    \bg_{\rm out}(y,\bomega,\bV) =  
    \frac{1}{\mathcal{Z}_{\rm out}(y, \bomega, \bV)} \frac{1}{(2\pi)^{p/2}}\int_{\mathbb{R}^{p}} \dd \bz \,e^{-\frac{1}{2}(\bz-\bomega)^{\top}\bV^{-1}(\bz-\bomega)}P(y|\bz)\bV^{-1}(\bz-\bomega)
\end{equation}
When computing the critical sample complexity we will also need $\partial_\bomega \bg_{\rm out}(y, \bomega, \bV)$, which can be expressed conveniently as
\begin{align*}
    &\partial_\bomega \bg_{\rm out}(y, \bomega, \bV)=\\&\frac{1}{\mathcal{Z}_{\rm out}(y, \bomega, \bV)} \frac{1}{(2\pi)^{p/2}}\int_{\mathbb{R}^{p}} \dd \bz \,e^{-\frac{1}{2}(\bz-\bomega)^{\top}\bV^{-1}(\bz-\bomega)}(\bz-\bomega) P(y|\bz)\bV^{-2}(\bz-\bomega)^\intercal - \bV^{-2} - \bg_{\rm out}(y, \bomega, \bI_p)\bg_{\rm out}(y, \bomega, \bI_p)^\intercal
\end{align*}
which becomes much simpler for $\bV = \bI_p$, $\bomega = \bzero$, and $\bg_{\rm out}(y, \bzero, \bI_p) = \bzero$:
\begin{align*}
    \partial_\bomega \bg_{\rm out}(y, \bzero, \bI_p)=\frac{1}{\mathcal{Z}_{\rm out}(y, \bzero, \bI_p)} \frac{1}{(2\pi)^{p/2}}\int_{\mathbb{R}^{p}} \dd \bz \,e^{-\frac{1}{2}\bz^{\top}\bz}(\bz\bz^\intercal - \bI_p) P(y|\bz)
\end{align*}.

\subsection{\texorpdfstring{$g(z_1, z_2, ...., z_p) = \prod_{j=1}^p z_j$}{}}
In this subsection we will compute the critical sample complexity for this class of models. The first observation we can make is that for all $p$, $\bg_{\rm out}(y, \bzero, \bI_p)=0$.
We can look at the cases $p=2$ and $p \geq 3$ separately. 

If $p=2$ we have:
\begin{equation}
    \mathcal{Z}_{\rm out}(y, \bzero, \bI_p) = \frac{K_0(|y|)}{\pi}
\end{equation}
and 
\begin{equation}
    \partial_\bomega \bg_{\rm out}(y, \bzero, \bI_p) = \begin{bmatrix}
        y\frac{K_1(y)}{K_0(y)}-1 & y\\
        y & y\frac{K_1(y)}{K_0(y)}-1\\
    \end{bmatrix}
\end{equation}
where $K_n(y)$ are the modified Bessel functions of the second kind. The integrals can be easily done in Mathematica. Since our model is invariant under permutations all the elements in the diagonal and out of diagonal need to take the same value. 

If $p>3$ the picture changes completely. First, we first have
\begin{equation}
    \mathcal{Z}_{\rm out}(y, \bzero, \bI_p) = \frac{1}{(2\pi)^{p/2}}G^{p, 0}_{0, p} \left( \frac{y^2}{2^p} \, \bigg| \, \begin{array}{c}
0 \\
0, 0, \ldots, 0
\end{array} \right)
\end{equation}
The next, crucial observation is that the out of diagonal terms of $\partial_\bzero \bg_{\rm out}(y, \bzero, \bI_p)$ are zero.
This is again a consequence of symmetry. We can see it in the practical example $p=3$. From the explicit equation at the beginning of the appendix we see that out of diagonal elements of $\partial_\bzero \bg_{\rm out}(y, \bzero, \bI_p)$ are proportional to the value of the following integral:
\begin{equation}
    \frac{1}{(2\pi)^{3/2}}\int_{\mathbb{R}^{3}} \dd \bz \,e^{-\frac{z_1^2 + z_2^2 + z_3^2}{2}} z_1 z_2 \,\delta(y - z_1 z_2 z_3)
\end{equation}
This integral is zero, as we can do a change of variable $\{z_1 \to z_1,\, z_2\to -z_2,\, z_3 \to -z_3 \}$: the delta function and the Gaussian measure are left unchanged, while $z_1 z_2$ switches sign. A similar approach can be used to show that out of diagonal elements are zero for every $p$.

The diagonal elements of $\partial_\bzero \bg_{\rm out}(y, \bzero, \bI_p)$ take all the same value $f(y)$ because of permutation invariance of the argument of $g$, even though there is not a simple expression of them. We propose the following writing based on Meyer's G function
\begin{equation}
    f(y) = 2 G^{p, 0}_{0, p} \left( \frac{y^2}{2^p} \, \bigg| \, \begin{array}{c}
0 \\
0, 0, \ldots, 0, 1
\end{array} \right) / G^{p, 0}_{0, p} \left( \frac{y^2}{2^p} \, \bigg| \, \begin{array}{c}
0 \\
0, 0, \ldots, 0
\end{array} \right) - 1
\end{equation}
We are finally able to compute the critical sample complexity $\alpha_c$. Unfortunately in the case $p=2$ there is no immediate expression for $\alpha_c$. Yet, it's possible to plug the expressions we just derived into \eqref{eq:stab_theorem} and perform the optimization in \eqref{eq:alpha_c} numerically. The case $p \geq 3$ is much easier, as $\partial_\bzero \bg_{\rm out}(y, \bzero, \bI_p) = f(y) \bI_p$
We first start with \eqref{eq:stab_theorem}
\begin{equation}
    \mathcal{F}(\bMM) = \mathbb{E}_{Y}\left[ \partial_{\bomega} \bg_{\rm out}(Y,\bzero,\bI_p)\, \bMM\partial_{\bomega} \bg_{\rm out}(Y,\bzero,\bI_p)^\top  \right] = \mathbb{E}_{Y}\left[ f(Y)^2  \right] M.
\end{equation}
The alpha critical $\alpha_c$ will thus be
\begin{equation}
    \alpha_c = \mathbb{E}_{Y}\left[ f(y)^2  \right]^{-1} = \left[\int_{-\infty}^{\infty}\,{\rm d}y f(y)^2  \mathcal{Z}_{\rm out}(y)\right]^{-1}
\end{equation}
Performing the integral numerically suggests the scaling law in the main text.

\subsection{\texorpdfstring{$\frac{1}{p}\sum_{i=1}^p z_i^2$}{}}
We are only interested in computing the critical sample complexity for the case $p>1$. The function $g$ is even, so $\bg_{\rm out}(y, \bzero, \bI_p)=0$. The trick for the computation is to move to generalised spherical coordinates by choosing $r^2 = z_2^2 + ... z_p^2$. Recall that the area of the unit sphere in $p-1$ dimensions is 
\begin{equation}
    \frac{2 \pi^{\frac{p-1}{2}}}{\Gamma\left(\frac{p-1}{2}\right)}
\end{equation}
The matrix $\partial_\bomega \bg_{\rm out}(y, \bzero, \bI_p)$ is proportional to the identity. This is because we have permutation invariance with respect to the arguments in $g$, so the matrix $\partial_\bomega \bg_{\rm out}(y, \bzero, \bI_p)$ has the same value on the diagonal and the same off diagonal. The off diagonal elements are proportional to 
\begin{equation}
    \frac{1}{(2\pi)^{p/2}}\int_{\mathbb{R}^{p}} \dd \bz \,e^{-\frac{1}{2}\sum_{i=1}^p z_i^2} z_1 z_2 \,\delta\left(y - \frac{1}{p}\sum_{i=1}^p z_p^2\right)
\end{equation}
If we do a change of variable $z_1 \to -z_1$ and all the other variables the same the integral changes sign, so this integral has to be zero. This shows that the matrix is diagonal, and because of the invariance under permutation also proportional to the identity. We are left to compute the actual value on the diagonal. First, we compute $\mathcal{Z}_{\rm out}$:
\begin{equation}
    \mathcal{Z}_{\rm out}(y, \bzero, \bI_p) = \frac{1}{(2\pi)^{p/2}}\int_{\mathbb{R}^{p}} \dd \bz \,e^{-\frac{1}{2}\sum_{i=1}^p z_i^2}  \,\delta\left(y - \frac{1}{p}\sum_{i=1}^p z_p^2\right)
\end{equation}
The integrand can be rewritten using the generalized spherical coordinates on the last $p-1$ variables
\begin{equation}
    \frac{1}{(2\pi)^{p/2}}\int_{\mathbb{R}^{p}} \dd \bz \, e^{-\frac{1}{2}\sum_{i=1}^p z_i^2}  \,\delta\left(y - \frac{1}{p}\sum_{i=1}^p z_p^2\right) = \frac{1}{2^{p/2-1}\Gamma\left(\frac{p-1}{2}\right)\sqrt{\pi}} \int_0^{+\infty} \dd r \int_{\mathbb{R}} \dd z_1  \,r^{p-2} \, e^{-\frac{z_1^2 + r^2}{2}}  \,\delta\left(y - \frac{z_1^2 + r^2}{p}\right)
\end{equation}
The last two integrals can be done by hand. Integrating over $z_1$ and then over $r$ we get
\begin{equation}
    \mathcal{Z}_{\rm out}(y, \bzero, \bI_p) = \frac{1}{2^{p/2-1}\Gamma\left(\frac{p-1}{2}\right)\sqrt{\pi}} \int_0^{\sqrt{p y}} \dd r  \,p\, r^{p-2} \, \frac{e^{-\frac{p y}{2}}}{\sqrt{p y - r^2}} = \frac{e^{-\frac{p y}{2}}}{y 2^{p/2}\Gamma\left(\frac{p}{2}\right)}  \,(p y)^{p/2}
\end{equation}
One can proceed in the same fashion to show that
\begin{equation}
    \frac{1}{(2\pi)^{p/2}}\int_{\mathbb{R}^{p}} \dd \bz \,e^{-\frac{1}{2}\sum_{i=1}^p z_i^2} (z_1^2-1) \,\delta\left(y - \frac{1}{p}\sum_{i=1}^p z_i^2\right) = \frac{e^{-\frac{p y}{2}}}{y 2^{p/2}\Gamma\left(\frac{p}{2}\right)}  \,(p y)^{p/2} (y - 1)\,,
\end{equation}
which gives us the result
\begin{equation}
    \partial_\bomega \bg_{\rm out}(y, \bzero, \bI_p) = (y-1)\bI_p\,.
\end{equation}
We can finally look at \eqref{eq:stab_theorem} and obtain
\begin{equation}
    \mathcal{F}(\bMM) = \mathbb{E}_{Y}\left[ (Y - 1)^2 \right] \bMM = \int_0^{+\infty} \dd y  \,\frac{e^{-\frac{p y}{2}}}{y 2^{p/2}\Gamma\left(\frac{p}{2}\right)}  \,(p y)^{p/2} (y-1)^2 \, \bMM = \frac{2}{p} \bMM
\end{equation}
from which it's immediate to obtain $\alpha_c = p/2$.

\subsection{\texorpdfstring{$g(z_1,\ldots,z_p) = \operatorname{sign}(\prod_{j=1}^pz_j)$}{}}\label{app:gout:prod_of_signs}
In this part of the section we will compute $\bg_{\rm out}$ as in eq. (\ref{eq:out_denoiser}) for arbitrary values of the argument.
Define $\bomega = (\omega_1, \ldots, \omega_p)^\top$, $v_{ij}\coloneqq(\bV^{-1})_{i,j}$ and the Gaussian probability density
\begin{equation}
    \rho(\bz\in\R^p,\bV ) := \frac{1}{(2\pi)^{\nicefrac{p}{2}}\sqrt{\det(\bV)}}e^{-\frac{1}{2}\bz^\top\bV^{-1}\bz}.
\end{equation}
Then, defining $\R_{+1} = [0, +\infty)$ and $\R_{-1} = (-\infty, 0]$, we have that eq. (\ref{app:def:Z_out})
\begin{align}
    \mathcal{Z}_{\rm out}(y, \bomega, \bV) &= \int_{\R^p}{\rm d}z_1\ldots{\rm d}z_p \rho(\bz-\bomega,\bV)\delta_{y,\operatorname{sign}(z_1\ldots z_p)}\\
    &=\sum_{s_1,\ldots,s_p=\pm1}\delta_{y,(\prod_j s_j)}\int_{\R_{s_1}}{\rm d}z_1\ldots\int_{\R_{s_p}}{\rm d}z_p \rho(\bz-\bomega,\bV).
\end{align}
In order to simplify this expression, we introduce the auxiliary function
\begin{align*}
     I_s(z_1,\ldots,z_{p-1},y,\bomega,\bV) &\coloneqq \int_{\R_s}{\rm d}z_p \frac{e^{-(\bz-\bomega)^T\bV^{-1}(\bz-\bomega)}}{(2\pi)^{\sfrac{p}{2}}\sqrt{\det \bV}} \\
    &= \frac{e^{-\sum_{i,j=1}^{p-1}(z_i-\omega_i)v_{ij}(z_j-\omega_j)}}{(2\pi)^{\sfrac{p}{2}}\sqrt{\det \bV}}\int_{\R_s}{\rm d}z_p \exp\left(-v_{pp}(z_p-\omega_p)^2 - 2(z_p-\omega_p)\sum_{i=1}^{p-1}v_{ip}(z_i-\omega_i)\right)\\
    &=\frac{e^{-\sum_{i,j=1}^{p-1}(z_i-\omega_i)(v_{ij}-v_{pp}^{-1}v_{ip}v_{jp})(z_j-\omega_j)}}{2v_{pp}\sqrt{2}(2\pi)^{\sfrac{(p-1)}{2}}\sqrt{\det \bV}}\left(1-s\cdot\operatorname{erf}\left(\frac{\sum_{i=1}^{p-1}v_{ip}(z_i-\omega_i) - v_{pp}\omega_p}{\sqrt{v_{pp}}}\right)\right)
\end{align*}
We introduce the matrix $\boldsymbol{d}(\bV)\in\mathbb{R}^{(p-1)\times(p-1)}$ such that
\begin{align}
    d_{ij}(\bV) &:= \det\left(\begin{array}{cc}
      v_{ij}   & v_{pi} \\
       v_{jp}  & v_{pp}
    \end{array}\right), \quad i,j \in{1,\ldots,p-1},
\end{align}
and the function
\begin{equation*}
    E_s(\bz\in\R^{p-1}, y, \bomega,\bV) = \frac{1}{2\sqrt{2}v_{pp}}\left(1-s\cdot\operatorname{erf}\left(\frac{\sum_{i=1}^{p-1}v_{ip}(z_i-\omega_i) - v_{pp}\omega_p}{\sqrt{v_{pp}}}\right)\right).
\end{equation*}

We can rewrite\footnote{It can be show that $\det(\boldsymbol{d}(\bV)) = \det(\bV^{-1})$ by considering the defining properties of the determinant.}
\begin{align}
    I_{s}(\bz\in\R^{p-1}, y, \bomega, \bV) &:= E_s(\bz, y, \bomega,\bV)\frac{\exp\left(-v_{pp}^{-1}\sum_{i,j=1}^{p-1}(z_i-\omega_i)d_{ij}(\bV)(z_j-\omega_j)\right)}{(2\pi)^{\sfrac{(p-1)}{2}}\sqrt{\det (\boldsymbol{d}(\bV)^{-1})}}\\
    &=v_{pp}^{\sfrac{(p-1)}{2}}E_s(\bz, y, \bomega,\bV)
    \rho\left(\left(\begin{array}{ccc}
        z_1 - \omega_2\\\ldots\\  z_{p-1}-\omega_{p-1}\end{array}\right),v_{pp}\boldsymbol{d}(\bV)^{-1}\right).
\end{align}
Therefore
\begin{align}
    \mathcal{Z}_{\rm out}(y,\bomega\bV) &= \sum_{s_1,\ldots,s_p = \pm1}\delta_{y, (\prod_js_j)}\int_{\R_{s_1}}{\rm d}z_1\ldots\int_{\R_{s_{p-1}}}{\rm d}z_{p-1} I_{s_p}(\bz, y, \bomega,\bV)\\
    &=\sum_{s_1,\ldots,s_{p-1} = \pm1}\int_{\R_{s_1}}{\rm d}z_1\ldots\int_{\R_{s_{p-1}}}{\rm d}z_{p-1} I_{ys_1\ldots s_{p-1}}(\bz, y, \bomega,\bV).
\end{align}
The components of $\bg_{\rm out}(y,\bomega,\bV)$ are given by
\begin{align*}
    (\bg_{\rm out}(y,\bomega,\bV))_p &=\frac{1}{  \mathcal{Z}_{\rm out}(y,\bomega\bV)} \sum_{s_1,\ldots,s_{p-1} = \pm1}\int_{\R_{s_1}}{\rm d}z_1\ldots\int_{\R_{s_{p-1}}}{\rm d}z_{p-1} \frac{\partial}{\partial \omega_p} I_{ys_1\ldots s_{p-1}}(z_1,\ldots,z_{p-1}, y, \bomega,\bV)\\
    &= \frac{1}{  \mathcal{Z}_{\rm out}(y,\bomega\bV)} \sum_{s_1,\ldots,s_{p-1} = \pm1}y\prod_{j=1}^{p-1}s_j\int_{\R_{s_1}}{\rm d}z_1\ldots\int_{\R_{s_{p-1}}}{\rm d}z_{p-1} \rho\left(\left(\begin{array}{ccc}
        z_1 - \omega_1\\\ldots\\  z_{p-1}-\omega_{p-1}\\-\omega_p\end{array}\right),\bV\right)\\
        &=\frac{1}{\mathcal{Z}_{\rm out}(y,\bomega\bV)} \sum_{s_1,\ldots,s_{p} = \pm1}y\prod_{j\neq p}s_j\int_{\R_{s_1}}{\rm d}z_1\ldots\int_{\R_{s_{p}}}{\rm d}z_{p} \rho\left(\bz - \bomega,\bV\right)\delta(z_p),
\end{align*}
and, analogously,
\begin{equation*}
    (\bg_{\rm out}(y,\bomega,\bV))_k = \frac{1}{\mathcal{Z}_{\rm out}(y,\bomega\bV)} \sum_{s_1,\ldots,s_{p} = \pm1}y\prod_{j\neq k}s_j\int_{\R_{s_1}}{\rm d}z_1\ldots\int_{\R_{s_{p}}}{\rm d}z_{p} \rho\left(\bz - \bomega,\bV\right)\delta(z_k).
\end{equation*}
\paragraph{Stability of the fixed point and critical sample complexity} \label{sec:app:parity}
We can now focus on studying the critical sample complexity. The first step is to compute $\partial_{\bomega}\bg_{\rm out}(y,\bzero,\bI_p)$ by using the identity
\begin{equation}
    \partial_{\bomega}\bg_{\rm out}(y,\bzero,\bI) = \frac{1}{Z_{\rm out}(y,\bzero,\bI)}\int{\rm d}\bz \frac{e^{-\frac{{\bz}^\intercal\bz}{2}}}{(2\pi)^{p/2}}\left(\bz\bz^\intercal - \bI\right)\delta\left(y - g(\bz)\right)
\end{equation}

To this end we know that 
\begin{equation}
    \mathcal{Z}_{\rm out}(y,\bzero,\bI) = \frac{1}{(2\pi)^{p/2}}\int {\rm d}\bz e^{-\frac{\bz^\top\bz}{2}} \delta_{y, \operatorname{sign}(z_1\ldots z_p)}= \frac{1}{2}
\end{equation}

In order to compute $\partial_\omega \bg_{\rm out}(y,\bzero,\bI)$, we note at first that
\begin{equation}
  \frac{1}{(2\pi)^{p/2}}\int {\rm d}\bz (z_k^2-1)e^{-\nicefrac{\bz^\top\bz}{2}} \delta_{y, \operatorname{sign}(z_1\ldots z_p)} = 0,\quad\forall k\in\{1,\ldots,p\}.
\end{equation}
For $p=2$, consider the integral
\begin{align}
    &\frac{1}{2\pi}\int{\rm d}z_1{\rm d}z_2\, z_1z_2 e^{-\frac{z_1^2}{2} - \frac{z_2^2}{2}}\delta_{y, \operatorname{sign}(z_1\ldots z_p)} \\
    =& \frac{2}{2\pi}\left(\delta_{y,1}\int_0^\infty{\rm d}z_1\int_0^\infty{\rm d}z_2\, z_1z_2 e^{-\frac{z_1^2}{2} - \frac{z_2^2}{2}} + \delta_{y,-1}\int_0^\infty{\rm d}z_1\int_{-\infty}^0{\rm d}z_2\, z_1z_2 e^{-\frac{z_1^2}{2} - \frac{z_2^2}{2}}\right)\\
    &=\frac{{\rm sign}{(y)}}{\pi}
\end{align}
Thus we have that 
\begin{equation}
    \partial_{\bomega}\bg_{\rm out}(y,\bzero,\bI) = \frac{2\,{\rm sign}(y)}{\pi}
    \begin{bmatrix}
        0 & 1\\
        1 & 0\\
    \end{bmatrix}
\end{equation}
Now we can notice that \eqref{eq:alpha_c} simplifies significantly as $\partial_{\bomega}\bg_{\rm out}(y,\bzero,\bI)$ is proportional to the counterdiagonal matrix for all values of $y$, thus having that
\begin{align*}
    \mathcal{F}(\bMM) \defeq \mathbb{E}_{Y}\left[ \partial_{\bomega} \bg_{\rm out}(Y,\bzero,\bI_p)\, \bMM\partial_{\bomega} \bg_{\rm out}(Y,\bzero,\bI_p)^\top  \right] = \frac{4}{\pi^2} \begin{bmatrix}
        0 & 1\\
        1 & 0\\
    \end{bmatrix}\bMM \begin{bmatrix}
        0 & 1\\
        1 & 0\\
    \end{bmatrix} 
\end{align*}
Finally we have that 
\begin{equation}
    \| \mathcal{F}(\bMM)\|_F = \frac{4}{\pi^2}\left\|\begin{bmatrix}
        0 & 1\\
        1 & 0\\
    \end{bmatrix}\bMM \begin{bmatrix}
        0 & 1\\
        1 & 0\\
    \end{bmatrix}\right\|_F = \frac{4}{\pi^2}\| \bMM\|_F
\end{equation}
Using \eqref{eq:alpha_c} we obtain that $\alpha_c$ is simply

$\alpha_c = \pi^2/4$.
For $p\geq 3$, defining $\R^p_{(\pm)}=\{\bz\in\R^p|\operatorname{sign}(z_1\ldots z_p)=\pm1\}$, we need to compute integrals of the type
\begin{align*}
    &\frac{1}{(2\pi)^{\nicefrac{p}{2}}}\int_{\R^p_{(+)}}{\rm d}z_1 \ldots{\rm d} z_p z_1 z_2 e^{-\frac{z_1^2}{2}}\ldots e^{-\frac{z_p^2}{2}} \\=& \frac{2}{2^{p-2}(2\pi)^{\nicefrac{p}{2}}}\left(\sum_{j=0}^{\lfloor\frac{p-2}{2}\rfloor}\left(\begin{array}{cc}
        p-2 \\
         2j
    \end{array}\right)\int_{\R^2_{(+)}} {\rm d}z_1{\rm d}z_2\, z_1z_2 e^{-\frac{z_1^2}{2} - \frac{z_2^2}{2}} +
    \sum_{j=1}^{\lfloor\frac{p-1}{2}\rfloor}\left(\begin{array}{cc}
        p-2 \\
         2j-1
    \end{array}\right)\int_{\R^2_{(-)}} {\rm d}z_1{\rm d}z_2\, z_1z_2 e^{-\frac{z_1^2}{2} - \frac{z_2^2}{2}}\right)\\
    =&\frac{2^{3-p}}{(2\pi)^{(p+2)/2}}\sum_{j=0}^{p-2}\left(\begin{array}{cc}
         p-2  \\
         j 
    \end{array}\right)(-1)^j = 0
\end{align*}
In the same way it is possible to prove that the integral over $\R_{(-)}^p$ is also vanishing. This implies that $\partial_{\bomega} \bg_{\rm out}=0$, so using \eqref{eq:alpha_c}, we obtain that $\alpha_c = +\infty$. This model cannot be learned with $n=\mathcal{O}(d)$ samples for $p \geq 3$.

\subsection{\texorpdfstring{$g(z_1,\ldots,z_p) = z_1^2 + \operatorname{sign}\left(\prod_{j=1}^pz_j\right)$}{}}

Following a procedure similar to Section \ref{app:gout:prod_of_signs}, we define $\bomega = (\omega_1, \ldots, \omega_p)^\top$ and $v_{ij}\coloneqq(\bV^{-1})_{i,j}$. 

We also introduce the matrix $\boldsymbol{d}(\bV)\in\mathbb{R}^{(p-1)\times(p-1)}$ such that
\begin{align}
    d_{ij}(\bV) &:= \det\left(\begin{array}{cc}
      v_{i,j}   & v_{i,p} \\
       v_{j,p}  & v_{p,p}
    \end{array}\right), \quad i,j \in{1,\ldots,p},
\end{align}
the function
\begin{equation*}
    E_{s,s_1,s_p}(\bz\in\R^{p-1}, y, \bomega,\bV) = \frac{1}{2\sqrt{2}v_{pp}}\left(1-s_p\cdot\operatorname{erf}\left(\frac{v_{1p}(s_1\sqrt{y-s}-\omega_1)+\sum_{i=2}^{p-1}v_{ip}(z_i-\omega_i) - v_{pp}\omega_p}{\sqrt{v_{pp}}}\right)\right),
\end{equation*}
the Gaussian probability density
\begin{equation}
    \rho(\bz\in\R^p,\bV ) := \frac{1}{(2\pi)^{\nicefrac{p}{2}}\sqrt{\det(\bV)}}e^{-\frac{1}{2}\bz^\top\bV^{-1}\bz},
\end{equation}
and finally
\begin{equation*}
    I_{s,s_1,s_{p-1}}(\bz\in\R^{p-1}, y, \bomega, \bV) := 
    \frac{E_{s,s_1,s_p}(\bz, y, \bomega,\bV)}{2\sqrt{y-s}}
    \rho\left(\left(\begin{array}{cccc}
        s_1\sqrt{y- s}-\omega_1  \\
        z_2 - \omega_2
        \ldots\\
        z_{p-1}-\omega_{p-1}  
    \end{array}\right),v_{pp}\boldsymbol{d}(\bV)^{-1}\right).
\end{equation*}
Then, defining $\R_{+1} = [0, +\infty)$ and $\R_{-1} = (-\infty, 0]$, we have that
\begin{align}\label{app:eq:phase_sign_Zout}
    \mathcal{Z}_{\rm out}(y,\bomega\bV) &= \sum_{s,s_1,\ldots,s_{p} = \pm1} \delta_{s,(\prod_js_j)}\boldsymbol{1}_{y > s}\int_{\R_{s_2}}{\rm d}z_2\ldots\int_{\R_{s_{p-1}}}{\rm d}z_{p-1} I_{s,s_1,s_p}(\bz, y, \bomega,\bV)\\
    &=\sum_{s,s_1,\ldots,s_{p-1} = \pm1} \boldsymbol{1}_{y > s}\int_{\R_{s_2}}{\rm d}z_2\ldots\int_{\R_{s_{p-1}}}{\rm d}z_{p-1} I_{s,s_1,s\cdot s_1...s_{p-1}}(\bz, y, \bomega,\bV),
\end{align}
where $\boldsymbol{1}_{x>0}$ is the Heaviside step function.
The components of $\bg_{\rm out}(y,\bomega,\bV)$ are given by
\begin{align*}
    (\bg_{\rm out}(y,\bomega,\bV))_1 &= \sum_{s,s_1,\ldots,s_{p-1} = \pm1} \frac{\boldsymbol{1}_{y > s}}{ \mathcal{Z}_{\rm out}(y,\bomega\bV) }\int_{\R_{s_2}}{\rm d}z_2\ldots\int_{\R_{s_{p-1}}}{\rm d}z_{p-1} \frac{\partial}{\partial \omega_1}I_{s,s_1,s\cdot s_1\ldots s_{p-1}}(\bz, y, \bomega,\bV)\\
    (\bg_{\rm out}(y,\bomega,\bV))_{k\neq1} &= \sum_{s,s_1,\ldots,s_p = \pm1} \frac{s\prod_{j\neq k}s_j\boldsymbol{1}_{y > s}}{ \mathcal{Z}_{\rm out}(y,\bomega\bV) }\int_{\R_{s_2}}{\rm d}z_2\ldots\int_{\R_{s_{p}}}{\rm d}z_{p}\;\rho\left(\left(\begin{array}{ccc}
        s_1\sqrt{y-s} - \omega_1\\z_2-\omega_2\\\ldots\\  z_{p}-\omega_{p}\end{array}\right),\bV\right)\delta(z_k)
\end{align*}
\paragraph{Stability of the fixed point} \label{app:gout:square_sign_alpha}
    We will use the same technique as \ref{app:gout:prod_of_signs}. The result is dependent on a number of integrals which we can compute analytically. First we have
    \begin{align}
        \mathcal{Z}_{\rm out}(y, \bzero, \bI) &= \frac{1}{(2\pi)^{\nicefrac{p}{2}}}\int{\rm d}z_1\ldots{\rm d}z_p \delta(y - z_1^2 - {\rm sign}(z_1\ldots z_p)) e^{-\frac{z_1^2}{2}-\ldots-\frac{z_p^2}{2}}\\
        &= \frac{1}{2(2\pi)^{\nicefrac{p}{2}}}\sum_{s_1,\ldots,s_p=\pm1} \frac{\boldsymbol{1}_{y > s}}{\sqrt{y-\prod_js_j}}\int_{\R_{s_2}}{\rm d}z_2\ldots\int_{\R_{s_{p-1}}}{\rm d}z_{p-1}e^{-\frac{y-s_1\ldots s_p}{2}-\frac{z_2^2}{2}-\ldots-\frac{z_p^2}{2}}\\
        &= \frac{1}{2\sqrt{2\pi}}\sum_{s=\pm 1} \boldsymbol{1}_{y > s}\frac{e^{-\frac{y-s}{2}}}{\sqrt{y-s}}\\
        &=\frac{e^{-\nicefrac{y}{2}}}{2\sqrt{2\pi e}}\left(\frac{1}{\sqrt{y+1}}+\bone_{y>1}\frac{e}{\sqrt{y-1}}\right)
    \end{align}

    From this it is straightforward to see that $\bg_{\rm out}^0(y)=0$. In order to compute $\nabla_\omega \bg_{\rm out}(y,0,1)$ we need to consider the additional integrals
    \begin{align}1)\;
        &\frac{1}{(2\pi)^{\nicefrac{p}{2}}}\int{\rm d}z_1\ldots{\rm d}z_p (z_1^2-1) e^{-\frac{z_1^2}{2}-\ldots-\frac{z_p^2}{2}}\delta(y - z_1^2 - {\rm sign}(z_1\ldots z_p))\\
        &= \frac{1}{2\sqrt{2\pi}}\sum_{s=\pm 1} \boldsymbol{1}_{y > s}\frac{e^{-\frac{y-s}{2}}(y-s-1)}{\sqrt{y-s}}\\
        &=\frac{e^{-y}}{2\sqrt{2\pi e}}\left(\frac{y}{\sqrt{y+1}}+\bone_{y>1}\frac{e(y-2)}{\sqrt{y-1}}\right)\\
        2)\;&\frac{1}{(2\pi)^{\nicefrac{p}{2}}}\int{\rm d}z_1\ldots{\rm d}z_p (z_{k\neq 1}^2-1) e^{-\frac{z_1^2}{2}-\ldots-\frac{z_p^2}{2}}\delta(y - z_1^2 - {\rm sign}(z_1\ldots z_p)) = 0\\
        3)\;&\frac{1}{(2\pi)^{\nicefrac{p}{2}}}\int{\rm d}z_1\ldots{\rm d}z_p z_jz_k e^{-\frac{z_1^2}{2}-\ldots-\frac{z_p^2}{2}}\delta(y - z_1^2 - {\rm sign}(z_1\ldots z_p)) \\
        &\propto \sum_{s,s_j,s_k =\pm1}\boldsymbol{1}_{y > s}s_js_ke^{-\frac{y-s}{2}} = 0\\
    \end{align}
This shows that
\begin{equation}
\mathcal{Z}_{\rm out}(y,\bzero, \bI_p)\partial_\bomega \bg_{\rm out}(y, \bzero, \bI_p) = \begin{bmatrix}
        C(y) & 0 & 0 \\
        0    & 0 & 0 \\
        0     & 0 & 0 \\
    \end{bmatrix}
\end{equation}
where $C(y)$ is
\begin{equation}
    C(y) = \begin{cases}
        y & -1<y<1 \\
        y -2e\frac{\sqrt{y+1}}{e\sqrt{y+1} + \sqrt{y-1}}-1 & y>1\\
    \end{cases}
\end{equation}
This means that
\begin{align*}
    \mathcal{F}(\bMM) \defeq \mathbb{E}_{Y}\left[ \partial_{\bomega} \bg_{\rm out}(Y,\bzero,\bI_p)\, \bMM\partial_{\bomega} \bg_{\rm out}(Y,\bzero,\bI_p)^\top  \right] = \int\,{\rm d}y C(y)^2 \mathcal{Z}_{\rm out}(y) \bMM_{11}
\end{align*}
The critical alpha is thus simply
\begin{equation}
    \alpha_c = \left[\int\,{\rm d}y C(y)^2 \mathcal{Z}_{\rm out}(y) \right]^{-1} \approx 0.575166
\end{equation}

\subsection{\texorpdfstring{$g(z_1, \ldots,z_p)= \sum_{j=1}^p\operatorname{sign}(z_p)$}{}}

\begin{figure}[t]
\begin{center}
\includegraphics[width=0.48\textwidth]{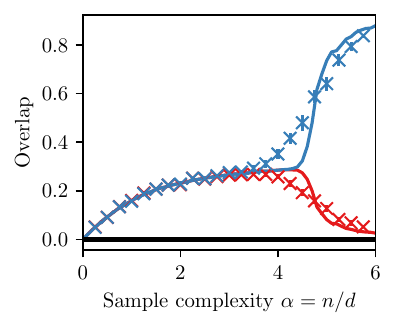}
\end{center}
    \caption{Specialisation as an example of grand staircase with $g\!=\!{\rm sign}(z_{1})\!+\!{\rm sign}(z_{2})\!+\!{\rm sign}(z_{3})$. The direction spanned by $z_1\!+\!z_2\!+\!z_3$ is first learned for any $\alpha>0$. The remaining ones are only learned at the {\it specialisation} transition occurring at $\alpha\!\approx\!4.3$. We take care of the symmetries as indicated in S.I. \ref{sec:app:numerics}. Crosses denote AMP runs with $d\!=\!100$ averaged over $72$ seeds. The overlap is shown as a function of the sample complexity. Because of the symmetry the elements of $M$ can take one of two values: one on the diagonal and one outside of it. We display $(M_{11}+M_{22}+M_{33})/3$ in {\color{blue} blue} and  $(M_{12}+M_{13}+M_{23})/3$ in {\color{red} red}.}
    \label{fig:committee}
\end{figure}

Following a procedure similar to Section \ref{app:gout:prod_of_signs}, we define $\bomega = (\omega_1, \ldots, \omega_p)^\top$ and $v_{ij}\coloneqq(\bV^{-1})_{i,j}$.

We also introduce the matrix $\boldsymbol{d}(\bV)\in\mathbb{R}^{(p-1)\times(p-1)}$ such that
\begin{align}
    d_{ij}(\bV) &:= \det\left(\begin{array}{cc}
      v_{i,j}   & v_{i,p} \\
       v_{j,p}  & v_{p,p}
    \end{array}\right), \quad i,j \in{1,\ldots,p},
\end{align}
the function
\begin{equation}
    E_s(\bz\in\R^{p-1}, y, \bomega,\bV) = \frac{1}{2\sqrt{2}v_{pp}}\left(1-s\cdot\operatorname{erf}\left(\frac{\sum_{i=1}^{p-1}v_{ip}(z_i-\omega_i) - v_{pp}\omega_p}{\sqrt{v_{pp}}}\right)\right),
\end{equation}
the Gaussian probability density
\begin{equation}
    \rho(\bz\in\R^p,\bV ) := \frac{1}{(2\pi)^{\nicefrac{p}{2}}\sqrt{\det(\bV)}}e^{-\frac{1}{2}\bz^\top\bV^{-1}\bz},
\end{equation}
and finally
\begin{equation*}
    I_{s}(\bz\in\R^{p-1}, y, \bomega, \bV) =v_{pp}^{\sfrac{(p-1)}{2}}E_s(\bz, y, \bomega,\bV)
    \rho\left(\left(\begin{array}{ccc}
        z_1 - \omega_1\\\ldots\\  z_{p-1}-\omega_{p-1}\end{array}\right),v_{pp}\boldsymbol{d}(\bV)^{-1}\right).
\end{equation*}
Then, defining $\R_{+1} = [0, +\infty)$ and $\R_{-1} = (-\infty, 0]$, we have that
\begin{align}
    \mathcal{Z}_{\rm out}=\sum_{\substack{{s_1,...,s_{p-1} = \pm 1 }\\{ |\sum_{i=1}^{p-1}s_i-y|=1}}}
    \int_{\R_{s_1}}{\rm d}z_{1}\ldots\int_{\R_{s_{p-1}}}{\rm d}z_{p-1}I_{y-\sum_{i=1}^{p-1}s_i}(\bz, y, \bomega, \bV),
\end{align}
and the $k^{\text{th}}$ component of $\bg_{\rm out}(y,\bomega,\bV)$ is
\begin{align*}
    \left(\bg_{\rm out}(y, \bomega,\bV)\right)_k = \frac{1}{\mathcal{Z}_{\rm out}}\sum_{\substack{{s_1,...,s_{p} = \pm 1 }\\{ |\sum_{i\neq k}s_i-y|=1}}}
    \left(y-\sum_{i\neq k}s_i\right)\int_{\R_{s_1}}{\rm d}z_{1}\ldots\int_{\R_{s_{p}}}{\rm d}z_{p}\rho(\bz-\bomega,\bV)\delta(z_k)
\end{align*}

\section{Further numerical observations}\label{sec:app:numerics}

\begin{figure*}
    \centering
    \includegraphics{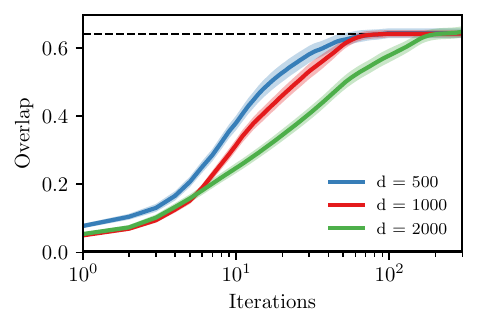}
    \caption{Trajectories of $16$ AMP runs with random initialisation at $\alpha = 4$ for $g(z_1,z_2) = {\rm sign}(z_1z_2)$. The shaded areas represent the error on the mean. We can see that as the dimension increases the algorithm gets increasingly slower}
    \label{fig:log_time}
\end{figure*}

Here we give more details about the numerical implementation of State Evolution and AMP. Both approaches require to compute $\bg_{\rm out}$. All the examples we implemented are detailed in S.I sec. \ref{app:examples}.

The integrals are doing with the quadrature package in Scipy. In order to avoid instabilities we regularize the interval of integration by replacing $\infty$ with $\Lambda$. We typically choose $\Lambda \approx 10$. Similarly, we add $\epsilon \approx 10^{-4}$ to the diagonal of $V$.
In State Evolution we need to integrate over functions of $\bg_{\rm out}$. We do these integrals using a simple Monte Carlo approach. For Figure \ref{fig:z1_sign} we used a total $72000$ samples, for Figure \ref{fig:committee} $7200$. 
Computing such integrals is the numerical bottleneck. In order to make this part faster we parallelised the MCMC: for each iteration of State Evolution we make every worker in our pool estimate the integral, and then average the estimation and then average among workers.
In the cases in which $\bMM=0$ is a fixed point, we initialize $\bMM$ with the empirical overlap of AMP at the beginning of the iteration.

In both the AMP and State Evolution implementation we used some damping: the overlap $\bMM$ or the $\hat{\bW}$ at the new iterations are averaged with the current value, with a weight $\delta$ for the new one, where typically $0.6 < \delta< 0.9$. We display the evolution of the overlaps in a typical run of AMP for the model $z_1^2 + {\rm sign}(z_1 z_2 z_3)$ in Figure \ref{fig:dynamics}. We already displayed the values of the overlap and generalisation error at convergence in Figure \ref{fig:z1_sign}. We can see how AMP has a saddle-to-saddle dynamic, where the algorithm alternates plateaus for $\mathcal{O}(\log{d})$ iterations to fast drops in generalisation error, which are associated with new directions being learned.
We probe experimentally the dependence on size in Figure \ref{fig:log_time}.
All the error bars present represent the error on the mean. We used the local computing cluster for our experiments. In particular we needed around $40$ CPUs with $72$ cores each and approximately $3$ Gb per core.

As stated in the main text, models non-trivial subspaces are associated with symmetries. This also implies that the associated overlaps are have the same invariances. In order to make the plots readable we remove all such symmetries by hand. For this reason we take the absolute value of the overlap if the model is even, and we impose a specific inequality in the overlap if there is invariance under permutation.
As a general idea we want to always have the "best" possible overlap. Meaning we want $\bMM$ to be as diagonal as possible. We list what this mean for the examples in the figures:
\begin{itemize}
    \item ${\rm sign} (z_1 z_2)$: Because of invariance under exchange of $z_1$ and $z_2$ we always have $\bMM_{11}=\bMM_{22}$. Here AMP can reach $2$ equivalent configurations: either the diagonal or the anti-diagonal is zero. We choose the configuration where the anti-diagonal is zero.
    \item $z_1^2 + {\rm sign}(z_1 z_2 z_3)$:
    This model will first learn just $z_1$. We fix $\bMM_{11}>0$. For the rest of the components we are reduced to the case above.
    \item ${\rm sign}(z_1) + {\rm sign}(z_2) + {\rm sign}(z_3)$: Because of the invariance under permutation each row of $\bMM$ will have either the same element in all the entries (in which case we don't need to do anything) or two are the same and one is bigger. We permute the matrix such that the largest entry is on the diagonal.
\end{itemize}

The code to run AMP and State Evolution on the examples in the  is available on GitHub

\url{https://github.com/SPOC-group/FundamentalLimitsMultiIndex}

\end{document}